\documentclass{article} %
\usepackage{iclr2027_conference,times}

\iclrfinalcopy %

\usepackage{amsmath,amsfonts,bm}

\def\eqref#1{equation~\ref{#1}}

\def\1{\bm{1}}

\DeclareMathAlphabet{\mathsfit}{\encodingdefault}{\sfdefault}{m}{sl}
\SetMathAlphabet{\mathsfit}{bold}{\encodingdefault}{\sfdefault}{bx}{n}

\newcommand{\R}{\mathbb{R}}

\usepackage{hyperref}
\usepackage{url}

\usepackage{amsmath,amssymb,amsthm,bm,mathtools}
\usepackage{graphicx,booktabs,array,xcolor}

\usepackage{hyperref}
\usepackage{url}
\usepackage{enumitem}
\usepackage{subcaption}
\hypersetup{hidelinks} %

\title{Explaining Hyperbolic Neural Networks \\via Geometry-Aware Relevance Propagation}

\author{Ping Xiong$^{1,2}$, Shanglin Li$^{1,2}$, Yi Ding$^{3}$, Thomas Schnake$^{4,5}$, Shinichi Nakajima$^{1,2,6}$\thanks{Correspondence to \texttt{nakajima@tu-berlin.de}.} \\
$^1$BIFOLD, Germany, $^2$Technische Universit\"at Berlin, Germany\\ $^3$Nanyang Technological University, Singapore, $^{4}$University of Toronto, Canada\\$^{5}$Vector Institute for Artificial Intelligence, Canada, $^{6}$RIKEN Center for AIP, Japan}

\newtheorem{proposition}{Proposition}
\newtheorem{definition}{Definition}

\newtheorem{corollary}{Corollary}
\newtheorem{theorem}{Theorem}
\newcommand{\one}{\bm 1}

\begin{document}
\maketitle

\lhead{}

\begin{abstract}
Hyperbolic neural networks introduce geometric operations that require explicit treatment in relevance propagation. Equivalent geometric realizations can produce different feature attributions, even when local relevance is conserved. We study this problem through Geometric Representation Invariance (GRI), a specialization of Implementation Invariance, and zero-curvature consistency, which requires identity relevance propagation when a geometric module approaches the identity. We propose LRP-radial-all for origin-centered radial modules, treating geometric scaling as modulation and assigning relevance entirely to the signal branch. The rule conserves relevance, is invariant to equivalent radial factorizations, and satisfies zero-curvature consistency, yielding GRI for a specified Poincar\'e--Lorentz logarithmic-map construction. In contrast, a conservative LRP-half baseline can violate both consistency criteria. Experiments on hyperbolic MNIST, sEEG, and CIFAR-10 classifiers assess attribution fidelity, qualitative explanations, and runtime. LRP-radial-all achieves competitive attribution fidelity across datasets with runtime comparable to Gradient$\times$Input and substantially lower than Integrated Gradients. These findings motivate geometry-aware propagation rules that distinguish relevance conservation from consistency across equivalent computations.
\end{abstract}

\section{Introduction}
\label{sec:intro}
Explainable artificial intelligence (XAI) seeks to clarify how models arrive at their predictions \citep{Gunning2019, DBLP:journals/pieee/SamekMLAM21, DBLP:conf/icml/HolzingerSMBS20, DBLP:series/lncs/11700}. However, most existing XAI methods have been developed primarily for conventional Euclidean neural networks. How these methods should be adapted to models whose computations are governed by non-Euclidean geometry remains much less understood.

This question becomes increasingly important for hyperbolic neural networks, which perform neural network operations in hyperbolic space and have attracted increasing attention \citep{peng2021hyperbolic, he2025hyperbolic}.
This interest is partly motivated by the mismatch between the exponential growth of tree-like hierarchies and the polynomial volume growth of Euclidean space. Hyperbolic space, by contrast, exhibits exponential volume growth with radius, making it well suited to representing hierarchical structures \citep{krioukov2010hyperbolic}.
Leveraging this geometric advantage, hyperbolic representations have shown promising performance across a range of tasks involving hierarchical structure, including computer vision \citep{DBLP:conf/nips/GaneaBH18, khrulkov2020hyperbolic, mettes2024hyperbolic}, natural language processing \citep{tifrea2018poincar}, foundation models \citep{desai2023hyperbolic,he2026helm}, and biosignal analysis \citep{zhou2026eeg,li2026heegnet}.

However, hyperbolic geometry poses distinct challenges for feature attribution: geometric maps couple feature coordinates, and equivalent representations can expose different computational factorizations of the same intrinsic function. An explanation method must therefore account for these operations while maintaining consistency across equivalent implementations.
Layer-wise relevance propagation (LRP) offers a particularly suitable framework for studying this problem, as it redistributes a prediction backward through individual network modules using explicit local rules \citep{bach2015pixel}. This modular structure allows us to examine how relevance should propagate through hyperbolic operations and whether local conservation is compatible with Geometric Representation Invariance.

In this work, we propose a relevance propagation framework for origin-centered hyperbolic operations. 
The origin provides a natural fixed reference point whose tangent space admits a Euclidean representation and is mathematically simple, facilitating efficient computation \citep{liu2019hyperbolic}.
We formulate LRP-radial-all rule for logarithmic maps and exponential maps, treating geometric scaling as modulation and assigning relevance to the signal branch. We study Geometric Representation Invariance (illustrated in Figure~\ref{fig:overview}) as a specialization of Implementation Invariance \citep{DBLP:conf/icml/SundararajanTY17}, prove consistency for the specified logarithmic-map construction, and show that a conservative LRP-half baseline can violate this property under equivalent radial factorizations. 
We evaluate the framework through controlled simulations and extensive experiments on various datasets and model architectures, including image and time series.
Among the five equivalent hyperbolic models \citep{cannon1997hyperbolic}, Poincaré and Lorentz are the most popular choices and are covered in this paper \citep{peng2021hyperbolic, he2025hyperbolic}.

We make three main contributions.
\begin{itemize}[leftmargin=1.35em,itemsep=0.2em,topsep=0.2em]
    \item We propose \emph{LRP-radial-all} for origin-centered hyperbolic modules, assigning relevance to the signal while treating geometric scaling as modulation.
    \item We specialize Implementation Invariance to \emph{GRI} and introduce \emph{zero-curvature consistency}. We prove conservation, radial factorization invariance, and zero-curvature consistency for radial-all, including GRI for the specified logarithmic-map test, and show that conservative LRP-half can violate both consistency criteria.
    \item We evaluate attribution fidelity and runtime on MNIST, sEEG, and CIFAR-10 hyperbolic classifiers, alongside controlled consistency tests.
\end{itemize}

\begin{figure}[t]
\centering
\includegraphics[width=0.9\linewidth]{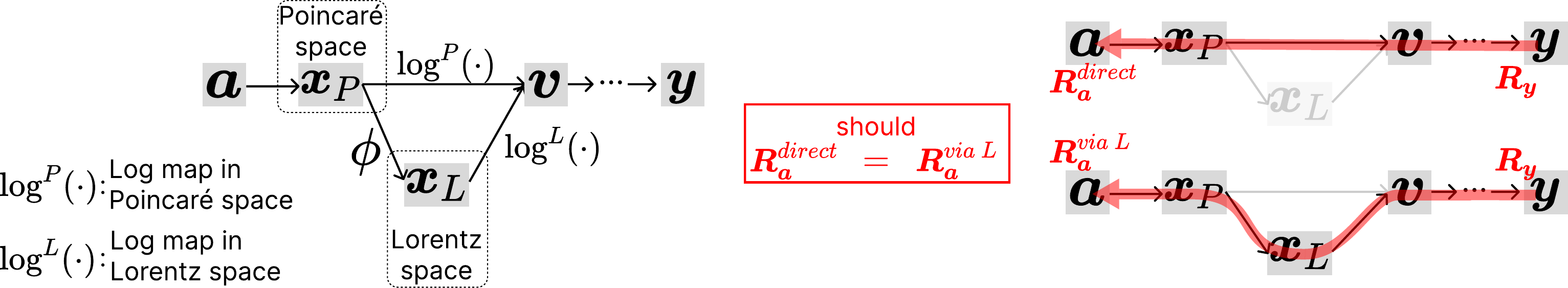}
\caption{Geometric Representation Invariance (GRI) requires explanations at the same input after equivalent internal coordinate realizations to be the same.}
\label{fig:overview}
\end{figure}

\section{Related Work}
\label{sec:related}
\noindent\textbf{Hyperbolic neural networks.}
Early work on hyperbolic neural networks extended standard neural operations to hyperbolic geometry. \citet{DBLP:conf/nips/GaneaBH18} introduced hyperbolic formulations of multinomial logistic regression and feed-forward layers using the Poincar\'e ball. Subsequent work broadened the architectural scope: \citet{shimizu2020hyperbolic} developed hyperbolic fully connected, convolutional, and attention layers, while \citet{DBLP:conf/iclr/BdeirSL24} introduced fully hyperbolic convolutional networks in the Lorentz model for computer vision. 
Beyond vision, hyperbolic architectures have also been applied to intracranial recordings. For example, \citet{li2026heegnet} incorporated hyperbolic layers into an EEGNet-based model for working-memory load classification from sEEG signals.
Together, these developments have made geometry-specific operations such as exponential maps, logarithmic maps, and manifold-valued feature transformations fundamental building blocks of modern hyperbolic architectures.

\noindent\textbf{Explainability for HNN.}
In contrast to the rapid development of hyperbolic architectures, their explainability has received comparatively limited attention.
Most feature-attribution methods were originally formulated for models operating in Euclidean spaces, and therefore do not explicitly account for the geometric operations introduced by HNNs.
Recent work has begun to incorporate non-Euclidean geometry into explainability.
Manifold Integrated Gradients (MIG)
\citep{zaher2024manifold} replaces the straight-line path of Integrated Gradients with geodesic paths on a learned Riemannian data manifold, thereby aligning attribution with the intrinsic geometry of the data.
Diffeomorphic counterfactual methods use generative models to construct coordinate systems in which gradient-based search produces semantically meaningful changes to the input \citep{DBLP:journals/pami/DombrowskiGMK24}. 
Such approaches demonstrate the relevance of geometry to explainability, but focus primarily on the geometry of the data manifold, rather than on propagating explanations through the internal geometric operations of a hyperbolic neural network.
Our work addresses the complementary problem.
We study how relevance should be propagated through the geometric primitives of HNNs themselves, with particular attention to logarithmic and exponential maps and transformations between equivalent hyperbolic representations.

\noindent\textbf{Layer-wise relevance propagation (LRP).}
\label{sec:RW.LRP}
Layer-wise relevance propagation (LRP) \citep{bach2015pixel} decomposes a model prediction into input relevance scores through architecture-specific backward redistribution rules, typically at a computational cost comparable to a backward computation. Deep Taylor Decomposition provides a theoretical foundation for certain LRP rules through local Taylor expansions of relevance functions \citep{DBLP:journals/pr/MontavonLBSM17, DBLP:journals/pieee/SamekMLAM21}.
Our work is particularly related to the rules for multiplicative interactions.
For gated recurrent networks, \citet{arras2017recurrent} assign all relevance to the source branch and none to the gate, interpreting the latter as a modulation of the signal. Similarly, \citet{DBLP:conf/icml/AliSEMMW22} treat attention weights as fixed during relevance propagation and route relevance through the value branch, following an LRP-all-style allocation for the attention-value product.
Alternatively, LRP-half for LSTM \citep{DBLP:series/lncs/ArrasAWMGMHS19} propagates half of the relevance to weight and half to signal. AttnLRP \citep{achtibat_attnlrp_2024} derives a uniform redistribution rule for multiplicative interactions.
These approaches motivate different treatments of hyperbolic radial maps.

\section{Geometry-aware Criteria}
\label{sec:geometry_requirements}

\subsection{Hyperbolic Neural Networks}
\label{sec:hnn}

Hyperbolic neural networks (HNNs) generalize neural operations
to hyperbolic space \citep{DBLP:conf/nips/GaneaBH18}.
We describe a representative architecture in the Poincar\'e ball
$\mathbb D_c^d=\{\bm x\in\mathbb R^d:
c\|\bm x\|_2^2<1\}$ with curvature $-c$, where $c>0$.
An input feature vector $\bm a\in\mathbb R^d$ is first
interpreted as an origin-tangent vector and mapped to the ball:
$    \bm h^{(0)}=\exp_{\bm0}^{c}(\bm a).
$
Each hidden layer performs a hyperbolic linear
transformation, bias addition, and activation:
\begin{align}
    \textstyle\bm m^{(\ell)}
    &=
    \exp_{\bm0}^{c}\!(
        \bm W^{(\ell)}
        \log_{\bm0}^{c}(\bm h^{(\ell-1)})
    )\oplus_c
    \exp_{\bm0}^{c}(\bm b^{(\ell)}),\\
    \textstyle\bm h^{(\ell)}
    &=
    \exp_{\bm0}^{c}\!(
        \sigma\!(\log_{\bm0}^{c}(\bm m^{(\ell)}))
    ),
\end{align}
where $\bm W^{(\ell)}$ and $\bm b^{(\ell)}$ are trainable
parameters, $\oplus_c$ denotes M\"obius addition (definition see Eq.\ref{eq:mobius-addition})
and $\sigma$ is an element-wise activation applied in the tangent space. We show the equations of $\exp_{\bm0}^{c}(\cdot)$ and $\log_{\bm0}^{c}(\cdot)$ in Figure~\ref{fig:split}.
For a tangent-space readout, the final representation is mapped back to Euclidean coordinates and passed to a prediction
head $g$:
$    \hat{\bm y}
    =
    g\!\left(\log_{\bm0}^{c}(\bm h^{(L)})\right).
$
Appendix~\ref{app:lorentz} summarizes the Lorentz model and its correspondence with the Poincar\'e representation.

\subsection{Geometric Representation Invariance}
\label{sec:GRI}

The Poincar\'e ball and Lorentz hyperboloid are isometric representations of the same hyperbolic geometry.
By transforming intermediate representations and the associated operations consistently, a model can be expressed in either space without changing its input-output function.
Ideally, explanations in the original input space should also remain unchanged under such a conversion.
This requirement is motivated by Implementation Invariance \citep{DBLP:conf/icml/SundararajanTY17}, which states that functionally equivalent models should yield identical input attributions.
We specialize this principle to equivalent hyperbolic
representations and refer to the resulting criterion as
\emph{Geometric Representation Invariance} (GRI).

\begin{definition}[GRI as a restricted Implementation Invariance criterion]
\label{def:gri}
Let $\bm a\in\R^p$ be the original feature vector. Let $F_P$ and $F_L$
implement the same scalar target function on a common input domain, with
internal hyperbolic computations related by the specified isometries.
For a fixed explanation procedure, GRI requires
\begin{equation}
 \mathcal E(F_P,\bm a)=\mathcal E(F_L,\bm a),
 \label{eq:gri}
\end{equation}
where $\mathcal E$ denotes an explanation method.
\end{definition}

For gradient-based methods defined solely through evaluations of the input-output function such as Gradient$\times$Input \citep{ancona2018towards} and Integrated Gradients \citep{DBLP:conf/icml/SundararajanTY17}, GRI follows directly from functional equivalence.
For propagation-based methods such as LRP, however, GRI is non-trivial because relevance redistribution depends on intermediate computations and local propagation rules.
Equivalent geometric representations may expose different computational factorizations, so local relevance conservation alone does not guarantee identical input attributions.

\noindent\textbf{A controlled test of GRI.}
\label{sec:gri-test}
Figure~\ref{fig:overview} illustrates a controlled test of whether relevance propagation depends on the internal representation of a hyperbolic logarithmic map. Starting from the same original input $\bm a$, a shared encoder produces a Poincar\'e representation $\bm x_P\in\mathbb D_c^d$. We compare two computational paths: the direct path applies the Poincar\'e logarithmic map, whereas the alternative path first converts $\bm x_P$ to the corresponding Lorentz point $\bm x_L=\phi(\bm x_P)$ and then applies the Lorentz logarithmic map. 

After aligning the tangent coordinates, both paths produce the same vector $\bm v\in\mathbb R^d$:
\begin{equation}
    \bm v
    =
    \alpha_P(\bm x_P)\bm x_P
    =
    \alpha_L(\bm x_L)\bar{\bm x}_L,
    \qquad
    \bar{\bm x}_L
    =
    \gamma(\bm x_P)\bm x_P.
    \label{eq:aligned-log}
\end{equation}
Here $\bm x_L=(x_{L,0},\bar{\bm x}_L)$, and $\alpha_L$ includes the factor $1/2$ that converts the ambient Lorentz tangent vector $(0,2\bm v)$ to the common coordinates $\bm v$.
The coordinate conversion and aligned logarithmic maps are derived in Appendix~\ref{app:log}.
Both paths then use the same downstream computation $y=G(\bm v)$, so they implement the same input-output function.

To test GRI, we initialize relevance at the same scalar output $y$ and propagate it backward along each path, as indicated by the red arrows in Figure~\ref{fig:overview}.
The downstream propagation is identical and therefore gives the same relevance $\bm R_{\bm v}$ to both realizations.
The direct path propagates this relevance through the Poincar\'e logarithmic map and the alternative path propagates it through the aligned Lorentz logarithmic map and the coordinate
conversion $\phi$.
Using the same relevance rule through the shared encoder, we compare the resulting explanations at the original input:
\begin{equation}
    \bm R_{\bm a}^{\mathrm{direct}}
    \stackrel{?}{=}
    \bm R_{\bm a}^{\mathrm{via}\,L}.
    \label{eq:log-gri-test}
\end{equation}
This construction isolates the effect of an equivalent geometric realization while keeping the input, prediction, and surrounding computation unchanged. 

Appendix~\ref{app:exp} presents a controlled exponential-map test with a shared Lorentz downstream network and an explicitly matched treatment of time-coordinate relevance.

\subsection{Zero-curvature consistency}
\label{sec:zero_curvature}

A natural consistency requirement arises when hyperbolic operations approach their Euclidean counterparts as curvature vanishes \citep{DBLP:conf/nips/GaneaBH18}. In particular, the origin-centered Poincar\'e exponential and logarithmic maps converge to the identity in the coordinate convention used here. Once such a geometric module becomes an identity transformation, it should no longer redistribute relevance among input coordinates. This motivates requiring its backward relevance rule to approach identity propagation as well. We formalize this local requirement as zero-curvature consistency.

\begin{definition}[Zero-curvature consistency]
\label{def:zero-curvature}
Let $f_c$ be a family of geometric modules expressed in fixed coordinates such that $f_c(\bm x)\to k\bm x$ as $c\to0$, where $k>0$ is fixed and independent of $\bm x$. We use identity relevance propagation for the limiting constant scaling, treating $k$ as a fixed modulation factor. Let $\mathcal R_c(\bm x,\bm R)$ denote the input relevance obtained by propagating a fixed incoming relevance vector $\bm R$ through $f_c$. The propagation rule is zero-curvature consistent if
\begin{equation}
\lim_{c\to0}\mathcal R_c(\bm x,\bm R)=\bm R
\end{equation}
for every input $\bm x$ and incoming relevance $\bm R$.
\end{definition}

This definition includes both identity limits and fixed coordinate scalings. The origin Poincar\'e maps approach the identity, whereas the aligned Lorentz spatial logarithmic and exponential maps approach scaling by $1/2$ and $2$, respectively. These constant factors reflect the tangent-coordinate convention and receive no separate relevance.

\section{Relevance Propagation Rules}
\label{sec:method}

In this section, we define relevance propagation rules for origin-centered hyperbolic modules. We introduce LRP-radial-all and contrast it with a conservative LRP-half baseline, showing that conservation alone does not ensure consistency across equivalent radial factorizations. 
For tangent-space linear layers, we use standard Euclidean LRP rules and their propagation formulas and conservation conditions are summarized in Appendix~\ref{app:numerics}. For M\"obius bias addition, we adopt a separate signal-only convention (see Appendix~\ref{app:bias}).

\subsection{Consistent relevance propagation through geometric modulation}
\label{sec:multiplicative-attribution}

Conservation alone does not determine how relevance should be allocated between multiplicative branches. Symmetric splitting can depend on the grouping of multiplication operations, while grouping-invariant symmetric rules require additional assumptions and may have restricted domains (Appendix~\ref{app:multiplicative-classification}). For our geometric modules, we instead distinguish the feature signal from its scalar modulation and adopt an all-to-signal allocation.

\noindent\textbf{Geometry as a modulation branch.}
For a radial map $\bm y=\alpha(\bm x)\bm x$, the two multiplicative branches have different roles: $\bm x$ carries the feature coordinates, while $\alpha(\bm x)$ applies a shared geometric scaling. We adopt a signal-based attribution convention that preserves the incoming feature-wise relevance allocation across this modulation. This choice is motivated by two properties. First, splitting the geometric scaling into successive modulation factors should not change the relevance propagated along the designated signal path. Second, when an origin map approaches the identity in the zero-curvature limit, its propagation rule should approach identity propagation as well.

These requirements motivate an all-to-signal allocation, consistent with earlier treatments of source-gate interactions \citep{arras2017recurrent} and attention-value products \citep{DBLP:conf/icml/AliSEMMW22}. 

\begin{definition}[LRP-radial-all]
\label{def:radial-all}
For a scalar-signal module $\bm y=\alpha(\bm x)\bm s$, LRP-radial-all assigns all incoming relevance to the designated signal $\bm s$ and none to the modulation factor:
\begin{equation}
R_\alpha=0, \qquad \bm R_{\bm s}=\bm R_{\bm y}.
\label{eq:geometry-weight-rule}
\end{equation}
For a radial map of a single vector, $\bm s=\bm x$ and $\alpha$ depends only on its radius.
\end{definition}

\begin{proposition}[Conservation and signal-path factorization invariance]
\label{prop:radial}
Consider functionally equivalent scalar-signal realizations that differ only in the splitting or grouping of scalar modulation factors along a designated signal path. Suppose the signal dimension and coordinate order are preserved, and the incoming relevance is identical. Applying the all-to-signal rule at every multiplication yields $\bm R_{\bm x}=\bm R_{\bm y}$, independently of the number or grouping of modulation factors, and conserves signed total relevance.
\end{proposition}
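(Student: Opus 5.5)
The plan is to model any such realization as a finite chain of elementary scalar-signal multiplications along the designated signal path. Write $\bm s_0=\bm x$ and $\bm s_k=\beta_k\,\bm s_{k-1}$ for $k=1,\dots,K$, with $\bm y=\bm s_K$. Each $\beta_k$ is a scalar modulation factor, possibly depending on $\bm x$, its radius, or earlier intermediates through side branches. Functional equivalence means only that $\prod_k\beta_k=\alpha(\bm x)$. Different splittings or groupings of the modulation correspond to different $K$ and different factors $\beta_k$. Grouping two factors $(\beta_k\beta_{k+1})\bm s_{k-1}$ into a single multiplication is just the case where they form one elementary step. The hypotheses that the signal dimension and coordinate order are preserved ensure that each $\bm s_k$ lives in the same coordinate system as $\bm x$ and $\bm y$. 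A relevance vector can therefore be passed along the chain coordinate by coordinate, without any reindexing.

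The core step is an induction backward along the chain. Definition~\ref{def:radial-all} applied to step $k$ gives $R_{\beta_k}=0$ and $\bm R_{\bm s_{k-1}}=\bm R_{\bm s_k}$. Starting from $\bm R_{\bm s_K}=\bm R_{\bm y}$, induction on $K-k$ gives $\bm R_{\bm s_k}=\bm R_{\bm y}$ for all $k$, and in particular $\bm R_{\bm x}=\bm R_{\bm s_0}=\bm R_{\bm y}$. This final expression involves neither $K$ nor the values $\beta_k$, so it is the same for every factorization and grouping. Since the incoming relevance $\bm R_{\bm y}$ is assumed identical across realizations, the signal-path relevance at $\bm x$ agrees across realizations.

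For conservation, I would sum the two outputs of each elementary step: $\sum_i R_{\bm s_{k-1},i}+R_{\beta_k}=\sum_i R_{\bm s_k,i}$. This is an exact signed equality, with no sign or stabilizer assumptions, because the rule copies the relevance and assigns exactly zero to the modulation branch. Telescoping over $k$ then gives $\sum_i R_{\bm x,i}+\sum_k R_{\beta_k}=\sum_i R_{\bm y,i}$, and since every $R_{\beta_k}=0$, the signed total is conserved.

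The main obstacle is not the algebra but making precise what counts as the same realization up to factor splitting. The subtle case is when a factor $\beta_k$ itself depends on $\bm x$, for example through the radius. A naive propagation could then route relevance back to $\bm x$ through the side branch of $\beta_k$, which would make the result depend on the factorization. I would handle this by noting that $R_{\beta_k}=0$, so whatever sub-network computes $\beta_k$ receives zero incoming relevance. Under any rule that is linear in incoming relevance, which holds for the rules used here, it therefore contributes zero to $\bm R_{\bm x}$. This needs to be stated explicitly as the reason the modulation branches drop out entirely.
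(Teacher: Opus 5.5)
Your proposal is correct and follows the same route as the paper. Each all-to-signal multiplication acts as $I_d$ on the designated signal and sends zero relevance to its modulation branch, so a chain of any length gives $\bm R_{\bm x}=\bm R_{\bm y}$, and hence $\one^\top\bm R_{\bm x}=\one^\top\bm R_{\bm y}$; your backward induction and telescoping sum expand the paper's one-line $I_d^m$ argument, and your remark that the sub-networks computing the modulation factors receive zero relevance, and so contribute nothing to $\bm R_{\bm x}$, makes explicit a point the paper leaves implicit.
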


\begin{proof}
Each multiplication has backward operator $I_d$ on its designated signal and assigns zero relevance to its modulation branch. A chain of $m\geq1$ such modules therefore gives $\bm R_{\bm x}=I_d^m\bm R_{\bm y}=\bm R_{\bm y}$, and consequently $\one^\top\bm R_{\bm x}=\one^\top\bm R_{\bm y}$.
\end{proof}

The proposition provides the local consistency used in our controlled GRI test.
More generally, necessary conditions for GRI and zero-curvature consistency can be derived (see Appendix~\ref{app:radial-characterization}) for the fixed-proportion radial rules
\begin{equation}
\textstyle R_{x_i}=\eta R_{y_i}+(1-\eta)\frac{x_i^2}{\|\bm x\|_2^2}\sum_jR_{y_j}, \qquad \eta\in[0,1],
\label{eq:eta-radial-family}
\end{equation}
where the same constant $\eta$ is used at every module and the modulation relevance is redistributed through the squared norm. This family includes LRP-half at $\eta=1/2$ and LRP-radial-all at $\eta=1$. For $d\geq2$, invariance under equivalent nonzero radial factorizations permits only $\eta=0$ or $\eta=1$. Requiring zero-curvature consistency as defined in Definition~\ref{def:zero-curvature} further selects $\eta=1$. Thus, within this specified family, radial-all is the unique rule satisfying both criteria.

\begin{figure}[t]
\centering
\includegraphics[width=0.7\linewidth]{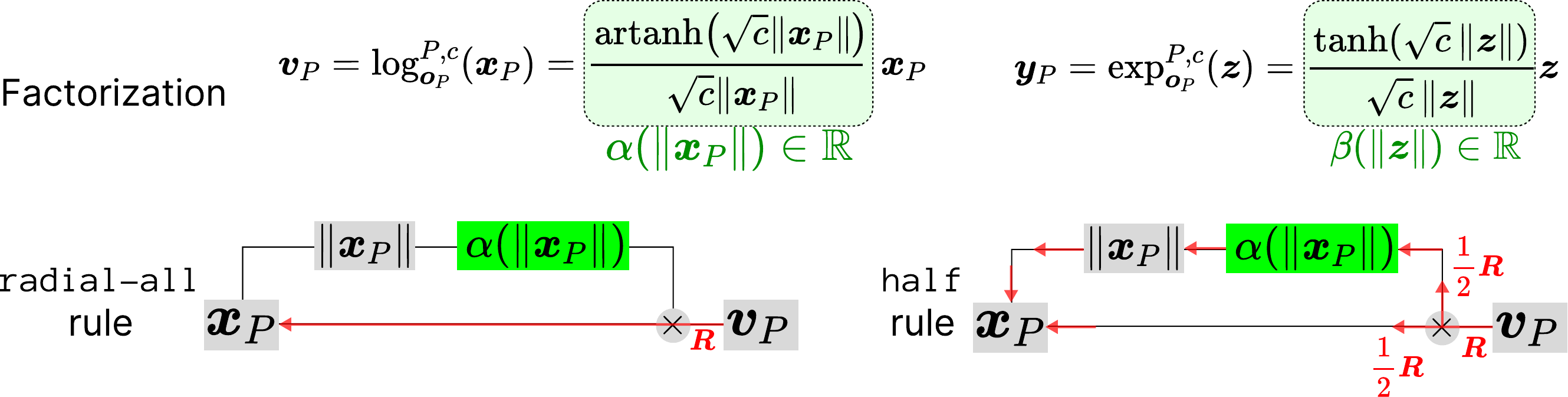}
\caption{Top: factorization of $\log$ and $\exp$ map with origin point in Poincar\'e space. Bottom: our proposed radial-all rule and half rule for LRP applied to such layer.}
\label{fig:split}
\end{figure}

\subsection{LRP-half as an alternative}

Motivated by equal-split rules for multiplicative interactions \citep{DBLP:series/lncs/ArrasAWMGMHS19, achtibat_attnlrp_2024}, we consider an LRP-half baseline that assigns half of the relevance to the signal and half to the radial factor, redistributing the latter through the squared norm (Figures~\ref{fig:split} and~\ref{fig:alpha_x}). In the logarithmic-map test of Eq.~\ref{eq:aligned-log}, the direct Poincar\'e path contains one radial module, whereas the equivalent Lorentz path contains two. For $\bm x_P\neq\bm0$, their backward allocations are
\begin{equation}
\textstyle R_{x_{P,i}}^{P}=\frac12R_{v_i}+\frac12\frac{x_{P,i}^2}{\|\bm x_P\|_2^2}\sum_jR_{v_j}, \qquad
R_{x_{P,i}}^{L}=\frac14R_{v_i}+\frac34\frac{x_{P,i}^2}{\|\bm x_P\|_2^2}\sum_jR_{v_j}. \label{eq:half-poincare-main}
\end{equation}
Both conserve the incoming total relevance, but generally disagree coordinate-wise. With an identity encoder, this difference directly violates GRI at the common original input. Appendix~\ref{app:half} provides the derivation and an explicit counterexample. Appendix~\ref{sec:exp_geometry} reports a numerical test.

\noindent\textbf{Failure of zero-curvature consistency.}
\label{sec:violates_consistency}
Although $\log_{\bm0}^{P,c}(\bm x_P)\to\bm x_P$ as $c\to0$, Eq.~\ref{eq:half-poincare-main} is independent of curvature for fixed input and incoming relevance, and generally differs from identity propagation. Thus, LRP-half also fails zero-curvature consistency and the same argument also applies to the origin exponential map. LRP-radial-all, by contrast, has identity backward action for every curvature. Conservation alone therefore guarantees neither of the two consistency criteria.

\section{Experiments}
\label{sec:experiments}

We compare attribution methods on hyperbolic classifiers for MNIST, sEEG, and CIFAR-10 using sparsity-fidelity metrics and qualitative visualizations. LRP-radial-all achieves competitive attribution fidelity. Runtime comparisons further show that LRP-radial-all is comparable in efficiency to Gradient$\times$Input and substantially faster than Integrated Gradients. We also present a controlled GRI numerical test in Appendix~\ref{sec:exp_geometry}.

\subsection{MNIST}
\label{sec:exp_mnist}

\noindent\textbf{Model and training.}
We use a Poincar\'e hyperbolic MLP with dimensions $784$-$128$-$10$, fixed curvature $-1$, M\"obius biases, and a tangent-space ReLU between the two hyperbolic linear layers. Images have pixel values in $[0,1]$ and flattened inputs are scaled by $0.1$ and mapped to the manifold using the origin exponential map. An origin logarithmic map produces the output logits.
The training configuration uses $55000$ training and $5000$ validation images, Adam with learning rate $10^{-3}$, batch size $128$ and cross-entropy loss. Training runs for $100$ epochs, with checkpoint selection by validation accuracy. The model has $97.44\%$ test accuracy.

\noindent\textbf{Explanation methods.}
All quantitative comparisons explain the original predicted-class logit of the model. We compare radial-all and the half rule for hyperbolic mapping layers using identical linear LRP-$\gamma$ rules ($\gamma=0.25$, $\epsilon=10^{-9}$) for other layers. Fixed bias branches receive zero relevance.
Baselines are Gradient$\times$Input, Integrated Gradients (IG), Patch
Occlusion, and random. IG uses a black-image baseline and $512$ midpoint integration steps along a straight path in pixel space. Patch Occlusion uses black $4\times4$ windows with stride $4$, assigning the target-logit drop to each pixel in the window.

\noindent\textbf{Sparsity-fidelity protocol.}
We evaluate randomly sampled $1000$ images from test set. In one image, pixels are ranked by their signed attribution scores in descending order. Let $S_k$ contain the top $k$ pixels and let $m_{S_k}$ be their binary pixel mask. For the fixed original predicted class $t$, we define
\begin{align}
 \operatorname{Sparsity}(k)&=1-k/K, &s_t(x)=\operatorname{softmax}(f(x))_t, \label{eq:sparsity}\\
 \operatorname{Fidelity}^{+}(k)&=s_t(x)-s_t(x\odot(1-m_{S_k})), &\operatorname{Fidelity}^{-}(k)=s_t(x)-s_t(x\odot m_{S_k}),\label{eq:fidplus}
\end{align}
where $K$ means the total number of pixels.
Higher Fidelity$^{+}$ indicates greater necessity of selected regions and lower Fidelity$^{-}$ indicates greater sufficiency. 
We use 50 regions in sparsity, and compute per-image AUC by  integration over actual sparsity values. Random rankings are averaged over $10$ repetitions per image. We report mean AUCs and percentile $95\%$ confidence intervals from $1000$ image-level bootstrap resamples.

\begin{table*}[t]
\centering
\small
\setlength{\tabcolsep}{3pt}
\caption{Sparsity-fidelity AUC on MNIST, sEEG and CIFAR-10. $F^{+}$ and $F^{-}$ denote Fidelity$^{+}$ and Fidelity$^{-}$ AUC.
Subscripts report $\pm h$, where $h=\max(\mu-L,U-\mu)$ for the 95\% bootstrap interval $[L,U]$.
Bold indicates the best mean within each dataset and metric. 
}
\label{tab:mnist_seeg_cifar_fidelity}
{\small
\begin{tabular}{lcccccc}
\toprule
& \multicolumn{2}{c}{MNIST}
& \multicolumn{2}{c}{sEEG}
& \multicolumn{2}{c}{CIFAR-10} \\
\cmidrule(lr){2-3}\cmidrule(lr){4-5}\cmidrule(lr){6-7}
Method & $F^{+}\uparrow$ & $F^{-}\downarrow$ & $F^{+}\uparrow$ & $F^{-}\downarrow$ & $F^{+}\uparrow$ & $F^{-}\downarrow$ \\
\midrule
LRP$_{\text{radial-all}}$
& $\mathbf{0.670}_{\pm0.005}$ & $\mathbf{-0.003}_{\pm0.005}$
& $\mathbf{0.313}_{\pm0.096}$ & $-0.111_{\pm0.017}$
& $0.637_{\pm0.020}$ & $\mathbf{0.277}_{\pm0.019}$ \\
LRP$_{\text{half}}$
& $0.562_{\pm0.011}$ & $0.063_{\pm0.008}$
& $0.310_{\pm0.095}$ & $-0.106_{\pm0.017}$
& $0.588_{\pm0.022}$ & $0.505_{\pm0.022}$ \\
\hline
Gradient$\times$Input
& $0.410_{\pm0.017}$ & $0.202_{\pm0.014}$
& $0.303_{\pm0.096}$ & $\mathbf{-0.113}_{\pm0.017}$
& $\mathbf{0.703}_{\pm0.020}$ & $0.547_{\pm0.023}$ \\
Integrated Grad.
& $0.664_{\pm0.005}$ & $0.000_{\pm0.005}$
& $0.309_{\pm0.096}$ & $\mathbf{-0.113}_{\pm0.017}$
& $0.683_{\pm0.020}$ & $0.495_{\pm0.022}$ \\
Patch Occlusion
& $0.504_{\pm0.013}$ & $0.067_{\pm0.007}$
& - & - & - & - \\
Random
& $0.193_{\pm0.005}$ & $0.193_{\pm0.005}$
& $0.057_{\pm0.023}$ & $0.058_{\pm0.022}$
& $0.664_{\pm0.022}$ & $0.663_{\pm0.022}$ \\
\bottomrule
\end{tabular}
}
\end{table*}

\begin{table}[t]
\centering
\caption{Runtime comparison of explanation methods.  We average the runtime on 3 runs. }
{\small
\begin{tabular}{lcccc}
\toprule
Dataset
& LRP-radial-all
& Gradient$\times$Input
& Occlusion
& Integrated Gradients \\
\midrule
MNIST (ms/image)
& {0.505}
& 0.960
& 12.676
& 41.511 \\
sEEG (ms/sample)
& 5.236
& {3.813}
& -
& 94.431\\
CIFAR-10 (ms/image)
& 15.688
& 12.730
& -
& 66.375\\
\bottomrule
\end{tabular}
}
\label{tab:runtime}
\end{table}

\noindent\textbf{Results.}
As shown in Table~\ref{tab:mnist_seeg_cifar_fidelity} and Figure~\ref{fig:mnist_quantitative}, LRP-radial-all achieves the highest Fidelity$^+$ AUC and the lowest Fidelity$^-$ AUC among the evaluated methods. Although the improvement from IG is marginal, LRP-radial-all is substantially faster than IG as in Table~\ref{tab:runtime}.
Figure~\ref{fig:mnist_qualitative} shows two example heatmaps by our method and baselines. LRP-radial-all highlights the important curves and intersections in the number pictures. Integrated Gradients can capture the important parts as well, but the heatmaps appear noisier.

\begin{figure}[t]
    \centering
    \includegraphics[width=0.8\linewidth]{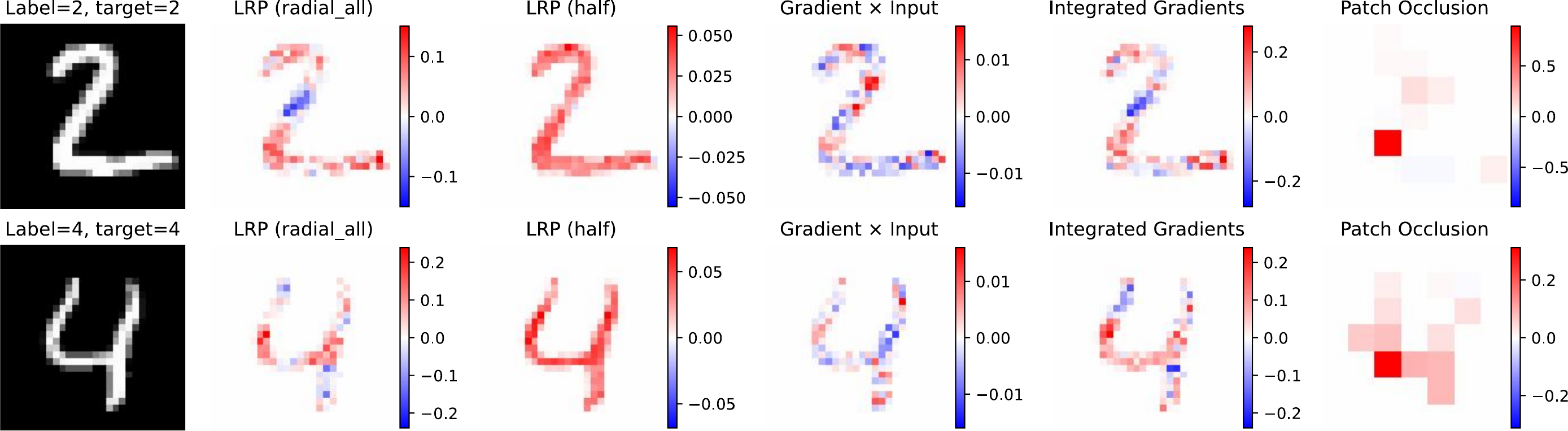}
    \caption{Heatmaps on images of MNIST. }
    \label{fig:mnist_qualitative}
\end{figure}

\subsection{sEEG}
sEEG captures interactions across electrodes and temporal scales, giving rise to hierarchical structure that can be naturally represented in hyperbolic space. 
Recent hyperbolic approaches to sEEG modeling have achieved state-of-the-art performance and substantially outperformed their Euclidean counterparts \citep{guillemaud2025hyperbolic, li2026heegnet}. 
Following \citet{li2026heegnet}, we evaluate on the working-memory sEEG dataset introduced by \citet{boran2020dataset}. The dataset contains 9 subjects and many trials for each subject.
Each trial is represented as a multichannel time series $\mathbf{X}\in\mathbb{R}^{C\times T}$, where $C$ denotes the number of electrodes and $T$ the number of temporal samples, and is associated with a binary label indicating whether the working-memory load is high or low. 
Following \citet{li2026heegnet}, we trained a variant of EEGNet in which the multi-layer perceptron is replaced with its hyperbolic counterpart.
On each subject we trained the model using 5-fold cross-validation.

\noindent\textbf{Qualitative results.}
We take subject 2 as an example and plot the signals and LRP-radial-all heatmaps averaged on all samples of it by the label in Figure~\ref{fig:seeg_visualization}. In the low-load condition, a prominent positive relevance region is concentrated within a relatively limited subset of channels, indicating a localized contribution to the model prediction. In contrast, the high-load condition exhibits more distributed relevance across multiple channels during the middle portion of the time window. Stronger relevance concentrations also emerge toward the later portion of the high-load window, indicating a greater contribution from these temporal segments. We validate the importance of the relevant time windows to the prediction in Appendix~\ref{app:seeg_validation}. In addition, matched upper-channel regions show opposite relevance polarities between the two conditions, indicating a class-dependent difference in how the model uses the same spatiotemporal region. The annotated boxes highlight these representative attribution patterns. The more distributed relevance observed under the high-load condition is consistent with recent intracranial evidence~\citep{Yang2025} showing increased inter-regional information sharing as working-memory demands increase. 

\begin{figure}
    \centering
    \begin{subfigure}{0.49\textwidth}
        \centering
        \includegraphics[width=\linewidth, trim=0 22.45cm 0 3cm, clip]{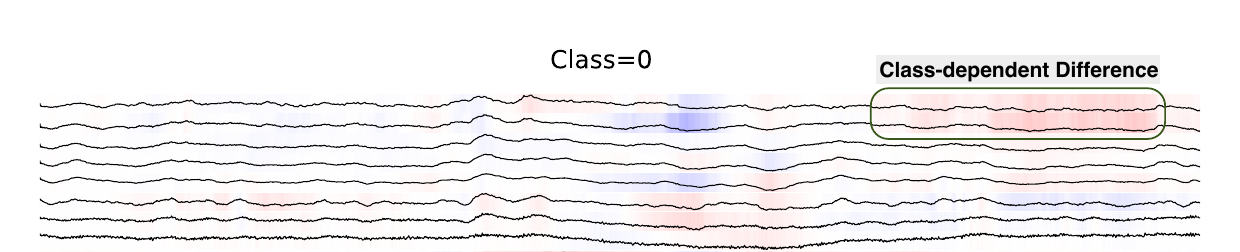}
        \vspace{-0.1em}        
        \raisebox{0pt}{ $\diagup\!\!\diagup$}
        \includegraphics[width=\linewidth, trim=0 3cm 0 22.57cm, clip]{windowed_heatmap_class_0.pdf}
    \end{subfigure}
    \hfill
    \begin{subfigure}{0.49\textwidth}
        \centering
        \includegraphics[width=\linewidth, trim=0 22.45cm 0 3cm, clip]{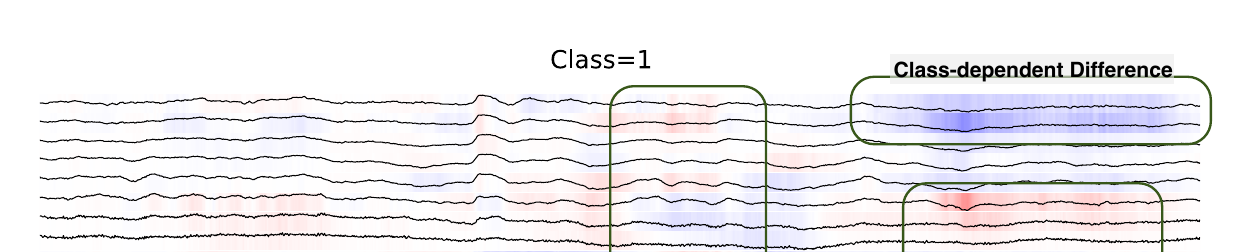}
        \vspace{-0.1em}        
        \raisebox{0pt}{ $\diagup\!\!\diagup$}
        \includegraphics[width=\linewidth, trim=0 3cm 0 22.57cm, clip]{windowed_heatmap_class_1.pdf}
    \end{subfigure}
    \caption{Averaged signals and relevance heatmaps for class 0 (low working-memory load) and class 1 (high working-memory load). Boxes highlight localized relevance, distributed multichannel relevance, stronger late-window relevance, and class-dependent attribution differences. Each time series stands for a sensor. Time range is 3-second with 3000 points. Red and blue indicate positive and negative relevance for the explained class score, respectively. 
    The full figure is in Figure~\ref{fig:seeg_visualization_full}.}
    \label{fig:seeg_visualization}
\end{figure}

Applying DFT-LRP \citep{DBLP:journals/pr/VielhabenLMS24}, we propagate relevance to the frequency space and visualize the relevance in Figure~\ref{fig:frequency_space_heatmap}. The results show that frequency-domain relevance is mainly concentrated in the lower-frequency range, particularly the delta band (0.5 - 4 Hz). Besides, the magnitude of delta-band relevance differs between the two working-memory load classes. These observations suggest the classifier mainly exploits low-frequency activity, with such activity contributing differently to the decision of the model under two working-memory load conditions.

\begin{figure}
    \centering
    \includegraphics[width=\linewidth]{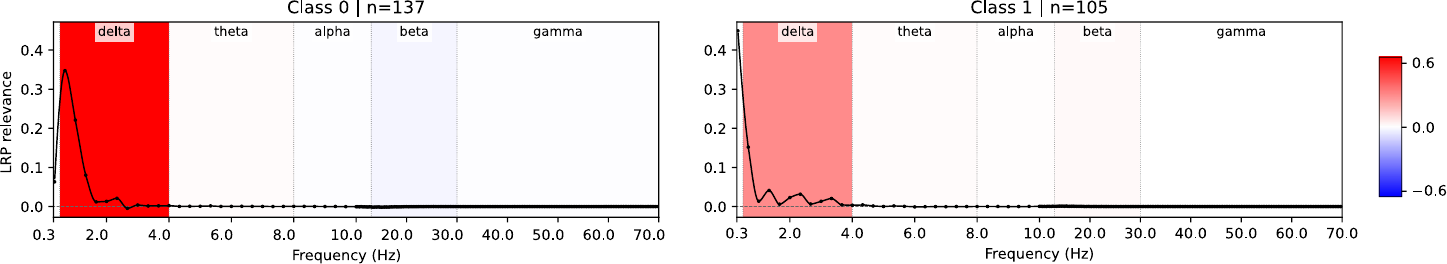}
    \caption{Relevance of sEEG prediction by class in frequency space. The relevance score is averaged on all trials of subject 2 in the dataset. We grouped the frequency-domain relevance into conventional electrophysiological frequency bands: delta (0.5–4 Hz), theta (4–8 Hz), alpha (8–13 Hz), beta (13–30 Hz), and gamma (30–70 Hz).}
    \label{fig:frequency_space_heatmap}
    \vspace{-0.8em}
\end{figure}

\noindent\textbf{Quantitative evaluation.}
We evaluate sensor-level attribution fidelity using all trials from all subjects in the dataset.
We follow the setting in \citep{li2026heegnet} and use subject-specific HEEGNet0 checkpoints from stratified five-fold within-subject cross-validation.
For an input $x\in\mathbb{R}^{C\times T}$, we fix the target class to the model's prediction and compute attributions for the target logit. The relevance of sensor $c$ is $R_c=\sum_{t=1}^{T}R_{c,t}$.
Sensors are ranked in descending order of $R_c$.
We compare LRP-radial-all, LRP-half, Gradient$\times$Input, Integrated Gradients (IG), and random rankings.
Both LRP variants use the $\epsilon$-rule with $\epsilon=10^{-9}$ for layers apart from hyperbolic layers.
IG uses a zero baseline and  integration over $128$ steps.
The random baseline averages five random rankings.

We replace perturbed sensors with the timepoint-wise mean across all sensors in the original trial.
We perturb the input by sparsity from $0\%$ to $100\%$ in $5\%$ increments, and calculate the fidelity as defined in  Eq.\ref{eq:sparsity},\ref{eq:fidplus}, as well as the area under the sparsity-fidelity curves.
We average trial-level AUCs across the held-out folds within each subject and then average the nine subject means, and the results are summarized in Table~\ref{tab:mnist_seeg_cifar_fidelity}. LRP-radial-all achieves the best average AUC of Fidelity$^+$ and comparable AUC of Fidelity$^-$ to the best baseline.

\subsection{CIFAR-10}
\label{sec:cifar}

We evaluate our relevance propagation method on CIFAR-10 using a lightweight classifier with six Lorentz convolutional layers inspired by \citet{DBLP:conf/iclr/BdeirSL24}. The model architecture and training details are in Appendix~\ref{app:model_cifar}. We assess the resulting explanations through qualitative heatmaps and quantitative fidelity and sparsity metrics. 

\noindent\textbf{Qualitative results.}
Figure~\ref{fig:cifar_qualitative} compares explanations for a CIFAR-10 image of ship. With the same $z^{\mathcal B}$-$\gamma$-$\epsilon$ configuration \citep{DBLP:journals/pr/MontavonLBSM17, DBLP:series/lncs/11700} for layers apart from hyperbolic layers, LRP-radial-all produces broader, spatially coherent attribution patterns that overlap visible object regions, whereas LRP-half concentrates attribution into smaller regions with positive and negative contributions. Gradient$\times$Input and Integrated Gradients exhibit much noisier patterns. These examples illustrate how the radial propagation rule affects attribution structure.

\begin{figure}
    \centering
    \includegraphics[width=0.8\linewidth]{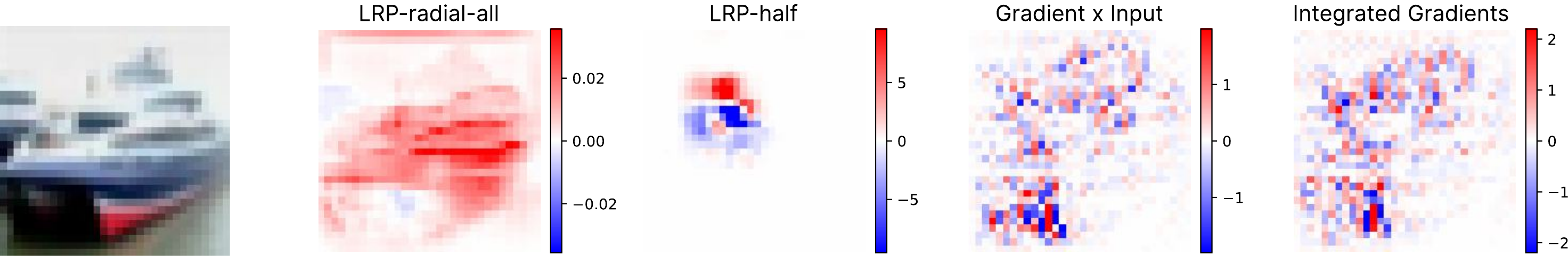}
    \caption{Qualitative attribution comparison on a correctly classified CIFAR-10 image of ship. Columns show the input, heatmaps for LRP-radial-all, LRP-half, Gradient$\times$Input, and Integrated Gradients. RGB-channel attributions are summed, with red and blue indicating positive and negative values, respectively. More examples in Figure~\ref{fig:cifar_qualitative_all}.}
    \label{fig:cifar_qualitative}
\end{figure}

\noindent\textbf{Quantitative evaluation.}
We evaluate attribution fidelity on 512 randomly sampled CIFAR-10 test images using pixel-wise mean-color replacement and predicted-class probabilities. As shown in Table~\ref{tab:mnist_seeg_cifar_fidelity}, LRP-radial-all achieves the lowest Fidelity$^{-}$ AUC, outperforming other methods, indicating stronger prediction preservation when retaining highly ranked pixels. However, its Fidelity$^{+}$ AUC falls even below Random. These results indicate stronger retention sufficiency but weaker deletion sensitivity under this perturbation protocol.

\subsection{Runtime evaluation}
\label{sec:runtime}

We compare attribution runtimes for MNIST on an AMD EPYC 7453, sEEG and CIFAR-10 on a NVIDIA A100.
As in Table~\ref{tab:runtime}, LRP-radial-all incurs substantially lower computational cost than Integrated Gradients, and comparable with Gradient$\times$Input that only includes one forward and backward computation of the model.

\section{Conclusion}
\label{sec:conclusion}

We developed a relevance propagation framework for origin-centered hyperbolic operations, guided by Geometric Representation Invariance and zero-curvature consistency. Our analysis shows that relevance conservation alone does not ensure either property, as the specified LRP-half baseline dependent on equivalent radial factorizations fails to approach identity propagation as curvature vanishes. 
By treating geometric scaling as modulation, LRP-radial-all conserves relevance locally, preserves the incoming allocation across radial factorizations, and satisfies zero-curvature consistency. 
These properties guarantee GRI for the specified Poincar\'e--Lorentz logarithmic-map construction. 
Experiments on MNIST, sEEG, and CIFAR-10 demonstrate competitive attribution fidelity with runtime comparable to Gradient$\times$Input and substantially lower than Integrated Gradients, while revealing a trade-off between retention and deletion performance on CIFAR-10. 

\noindent\textbf{Limitations and future work.}
Our theoretical guarantees apply to the specified module realizations and propagation conventions. 
The controlled exponential map test additionally requires a treatment of time-coordinate relevance, and invariance under alternative treatments remains an open question.
Future work includes extending the framework to non-origin base points and structured explanations of message-passing paths and higher-order interactions as in \citet{DBLP:journals/corr/abs-2606-00557}.

\subsubsection*{Acknowledgments}
This work was funded by the German Ministry for Education and Research as BIFOLD - Berlin Institute for the Foundations of Learning and Data (ref. BIFOLD25B). 
Thomas Schnake is a postdoctoral fellow at the University of Toronto in the Eric and Wendy Schmidt AI in Science Postdoctoral Fellowship Program, a program of Schmidt Sciences.

\bibliography{iclr2027_conference}

@article{peng2021hyperbolic,
  title={Hyperbolic deep neural networks: A survey},
  author={Peng, Wei and Varanka, Tuomas and Mostafa, Abdelrahman and Shi, Henglin and Zhao, Guoying},
  journal={IEEE Transactions on pattern analysis and machine intelligence},
  volume={44},
  number={12},
  pages={10023--10044},
  year={2021},
  publisher={IEEE},
doi={10.1109/TPAMI.2021.3136921}
}

@article{krioukov2010hyperbolic,
  title={Hyperbolic geometry of complex networks},
  author={Krioukov, Dmitri and Papadopoulos, Fragkiskos and Kitsak, Maksim and Vahdat, Amin and Bogun{\'a}, Mari{\'a}n},
  journal={Physical Review E—Statistical, Nonlinear, and Soft Matter Physics},
  volume={82},
  number={3},
  pages={036106},
  year={2010},
  publisher={APS},
doi={10.1103/PhysRevE.82.036106}
}

@article{DBLP:journals/pami/DombrowskiGMK24,
  author       = {Ann{-}Kathrin Dombrowski and
                  Jan E. Gerken and
                  Klaus{-}Robert M{\"{u}}ller and
                  Pan Kessel},
  title        = {Diffeomorphic Counterfactuals With Generative Models},
  journal      = {{IEEE} Trans. Pattern Anal. Mach. Intell.},
  volume       = {46},
  number       = {5},
  pages        = {3257--3274},
  year         = {2024}
}

@article{mettes2024hyperbolic,
  title={Hyperbolic deep learning in computer vision: A survey},
  author={Mettes, Pascal and Ghadimi Atigh, Mina and Keller-Ressel, Martin and Gu, Jeffrey and Yeung, Serena},
  journal={International Journal of Computer Vision},
  volume={132},
  number={9},
  pages={3484--3508},
  year={2024},
  publisher={Springer},
doi={10.1007/s11263-024-02043-5}
}

@inproceedings{
li2026heegnet,
title={{HEEGN}et: Hyperbolic Embeddings for {EEG}},
author={Shanglin Li and Chu Shiwen and Okan Ko{\c{c}} and Yi Ding and Qibin Zhao and Motoaki Kawanabe and Ziheng Chen},
booktitle={The Fourteenth International Conference on Learning Representations},
year={2026}
}

@article{zhou2026eeg,
  title={EEG-Based Multimodal Learning via Hyperbolic Mixture-of-Curvature Experts},
  author={Zhou, Runhe and Li, Shanglin and Huang, Guanxiang and Zhou, Xinliang and Zhao, Qibin and Kawanabe, Motoaki and Ding, Yi and Guan, Cuntai},
  journal={International Conference on Machine Learning},
  year={2026}
}

@inproceedings{desai2023hyperbolic,
  title={Hyperbolic image-text representations},
  author={Desai, Karan and Nickel, Maximilian and Rajpurohit, Tanmay and Johnson, Justin and Vedantam, Shanmukha Ramakrishna},
  booktitle={International Conference on Machine Learning},
  pages={7694--7731},
  year={2023},
  organization={PMLR}
}

@article{he2026helm,
  title={Helm: Hyperbolic large language models via mixture-of-curvature experts},
  author={He, Neil and Anand, Rishabh and Madhu, Hiren and Maatouk, Ali and Krishnaswamy, Smita and Tassiulas, Leandros and Yang, Menglin and Ying, Rex},
  journal={Advances in Neural Information Processing Systems},
  volume={38},
  pages={142604--142635},
  year={2026}
}

@inproceedings{khrulkov2020hyperbolic,
  title={Hyperbolic image embeddings},
  author={Khrulkov, Valentin and Mirvakhabova, Leyla and Ustinova, Evgeniya and Oseledets, Ivan and Lempitsky, Victor},
  booktitle={2020 IEEE/CVF conference on computer vision and pattern recognition (CVPR)},
  pages={6417--6427},
  year={2020},
  organization={IEEE}
}

@inproceedings{tifrea2018poincar,
author       = {Alexandru Tifrea and
                  Gary B{\'{e}}cigneul and
                  Octavian{-}Eugen Ganea},
  title        = {Poincar{\'e} Glove: Hyperbolic Word Embeddings},
  booktitle    = {{ICLR} (Poster)},
  publisher    = {OpenReview.net},
  year         = {2019}
}

@inproceedings{he2025hyperbolic,
  author       = {Neil He and
                  Hiren Madhu and
                  Ngoc Bui and
                  Menglin Yang and
                  Rex Ying},
  title        = {Hyperbolic Deep Learning for Foundation Models: {A} Survey},
  booktitle    = {{KDD} {(2)}},
  pages        = {6021--6031},
  publisher    = {{ACM}},
  year         = {2025}
}

@inproceedings{shimizu2020hyperbolic,
 author       = {Ryohei Shimizu and
                  Yusuke Mukuta and
                  Tatsuya Harada},
  title        = {Hyperbolic Neural Networks++},
  booktitle    = {{ICLR}},
  publisher    = {OpenReview.net},
  year         = {2021}
}

@inproceedings{zaher2024manifold,
  author       = {Eslam Zaher and
                  Maciej Trzaskowski and
                  Quan Nguyen and
                  Fred Roosta},
  title        = {Manifold Integrated Gradients: Riemannian Geometry for Feature Attribution},
  booktitle    = {{ICML}},
  series       = {Proceedings of Machine Learning Research},
  volume       = {235},
  pages        = {58090--58104},
  publisher    = {{PMLR} / OpenReview.net},
  year         = {2024}
}

@article{cannon1997hyperbolic,
  title={Hyperbolic geometry},
  author={Cannon, James W and Floyd, William J and Kenyon, Richard and Parry, Walter R},
  journal={Flavors of geometry},
  volume={31},
  pages={59-115},
  year={1997},
  publisher={Citeseer}
}

@article{Gunning2019,
  title = {XAI—Explainable artificial intelligence},
  volume = {4},
  number = {37},
  journal = {Science Robotics},
  publisher = {American Association for the Advancement of Science (AAAS)},
  author = {Gunning,  David and Stefik,  Mark and Choi,  Jaesik and Miller,  Timothy and Stumpf,  Simone and Yang,  Guang-Zhong},
  year = {2019},
}

@Article{DBLP:journals/pieee/SamekMLAM21,
  author    = {Wojciech Samek and Gr{\'{e}}goire Montavon and Sebastian Lapuschkin and Christopher J. Anders and Klaus-Robert M{\"{u}}ller},
  journal   = {Proc. {IEEE}},
  title     = {Explaining Deep Neural Networks and Beyond: {A} Review of Methods and Applications},
  year      = {2021},
  number    = {3},
  pages     = {247--278},
  volume    = {109},
  bibsource = {dblp computer science bibliography, https://dblp.org},
  biburl    = {https://dblp.org/rec/journals/pieee/SamekMLAM21.bib},
  __doi       = {10.1109/JPROC.2021.3060483},
  timestamp = {Sun, 06 Oct 2024 21:36:51 +0200},
}

@InProceedings{DBLP:conf/icml/HolzingerSMBS20,
author       = {Andreas Holzinger and
                  Anna Saranti and
                  Christoph Molnar and
                  Przemyslaw Biecek and
                  Wojciech Samek},
  title        = {Explainable {AI} Methods - {A} Brief Overview},
  booktitle    = {xxAI@ICML},
  __series       = {Lecture Notes in Computer Science},
  pages        = {13--38},
  publisher    = {Springer},
  year         = {2020}
}

@Book{DBLP:series/lncs/11700,
  editor       = {Wojciech Samek and
                  Gr{\'{e}}goire Montavon and
                  Andrea Vedaldi and
                  Lars Kai Hansen and
                  Klaus-Robert M{\"{u}}ller},
  title        = {Explainable {AI:} Interpreting, Explaining and Visualizing Deep Learning},
  series       = {Lecture Notes in Computer Science},
  volume       = {11700},
  publisher    = {Springer},
  year         = {2019}
}

@article{bach2015pixel,
  title={On pixel-wise explanations for non-linear classifier decisions by layer-wise relevance propagation},
  author={Bach, Sebastian and Binder, Alexander and Montavon, Gr{\'e}goire and Klauschen, Frederick and M{\"u}ller, Klaus-Robert and Samek, Wojciech},
  journal={PloS one},
  volume={10},
  number={7},
  pages={e0130140},
  year={2015},
  publisher={Public Library of Science San Francisco, CA USA}
}

@inproceedings{DBLP:conf/icml/SundararajanTY17,
  author       = {Mukund Sundararajan and
                  Ankur Taly and
                  Qiqi Yan},
  title        = {Axiomatic Attribution for Deep Networks},
  booktitle    = {{ICML}},
  series       = {Proceedings of Machine Learning Research},
  volume       = {70},
  pages        = {3319--3328},
  publisher    = {{PMLR}},
  year         = {2017}
}

@Article{DBLP:journals/pr/MontavonLBSM17,
  author    = {Gr{\'{e}}goire Montavon and Sebastian Lapuschkin and Alexander Binder and Wojciech Samek and Klaus-Robert M{\"{u}}ller},
  journal   = {Pattern Recognit.},
  title     = {Explaining nonlinear classification decisions with deep Taylor decomposition},
  year      = {2017},
  pages     = {211--222},
  volume    = {65},
  bibsource = {dblp computer science bibliography, https://dblp.org},
  biburl    = {https://dblp.org/rec/journals/pr/MontavonLBSM17.bib},
  __doi       = {10.1016/J.PATCOG.2016.11.008},
  timestamp = {Mon, 24 Feb 2020 08:30:09 +0100},
}

@inproceedings{arras2017recurrent,
    author       = {Leila Arras and
                  Gr{\'{e}}goire Montavon and
                  Klaus-Robert M{\"{u}}ller and
                  Wojciech Samek},
  title        = {Explaining Recurrent Neural Network Predictions in Sentiment Analysis},
  booktitle    = {WASSA@EMNLP},
  pages        = {159--168},
  publisher    = {Association for Computational Linguistics},
  year         = {2017}
}

@inproceedings{achtibat_attnlrp_2024,
  author       = {Reduan Achtibat and
                  Sayed Mohammad Vakilzadeh Hatefi and
                  Maximilian Dreyer and
                  Aakriti Jain and
                  Thomas Wiegand and
                  Sebastian Lapuschkin and
                  Wojciech Samek},
  title        = {AttnLRP: Attention-Aware Layer-Wise Relevance Propagation for Transformers},
  booktitle    = {{ICML}},
  __series       = {Proceedings of Machine Learning Research},
  pages        = {135--168},
  publisher    = {{PMLR} / OpenReview.net},
  year         = {2024}
}

@inproceedings{DBLP:conf/icml/AliSEMMW22,
  author       = {Ameen Ali and
                  Thomas Schnake and
                  Oliver Eberle and
                  Gr{\'{e}}goire Montavon and
                  Klaus-Robert M{\"{u}}ller and
                  Lior Wolf},
  __editor       = {Kamalika Chaudhuri and
                  Stefanie Jegelka and
                  Le Song and
                  Csaba Szepesv{\'{a}}ri and
                  Gang Niu and
                  Sivan Sabato},
  title        = {{XAI} for Transformers: Better Explanations through Conservative Propagation},
  booktitle    = {International Conference on Machine Learning, {ICML} 2022, 17-23 July
                  2022, Baltimore, Maryland, {USA}},
  __series       = {Proceedings of Machine Learning Research},
  volume       = {162},
  pages        = {435--451},
  publisher    = {{PMLR}},
  year         = {2022},
  __url          = {https://proceedings.mlr.press/v162/ali22a.html},
  timestamp    = {Tue, 12 Jul 2022 17:36:52 +0200},
  biburl       = {https://dblp.org/rec/conf/icml/AliSEMMW22.bib},
  bibsource    = {dblp computer science bibliography, https://dblp.org}
}

@inproceedings{DBLP:conf/nips/GaneaBH18,
  author       = {Octavian{-}Eugen Ganea and
                  Gary B{\'{e}}cigneul and
                  Thomas Hofmann},
  title        = {Hyperbolic Neural Networks},
  booktitle    = {NeurIPS},
  pages        = {5350--5360},
  year         = {2018}
}

@article{DBLP:journals/corr/abs-2606-00557,
  author       = {Ping Xiong and
                  Thomas Schnake and
                  Gr{\'{e}}goire Montavon and
                  Klaus{-}Robert M{\"{u}}ller and
                  Shinichi Nakajima},
  title        = {Normalized Relevance Measure as a Unifying Framework to Explain Neural
                  Network Latent Structures},
  journal      = {CoRR},
  volume       = {abs/2606.00557},
  year         = {2026}
}

@inproceedings{ancona2018towards,
  author       = {Marco Ancona and
                  Enea Ceolini and
                  Cengiz {\"{O}}ztireli and
                  Markus Gross},
  title        = {Towards better understanding of gradient-based attribution methods
                  for Deep Neural Networks},
  booktitle    = {{ICLR} (Poster)},
  publisher    = {OpenReview.net},
  year         = {2018}
}

@article{liu2019hyperbolic,
  title={Hyperbolic graph neural networks},
  author={Liu, Qi and Nickel, Maximilian and Kiela, Douwe},
  journal={Advances in neural information processing systems},
  volume={32},
  year={2019}
}

@InCollection{DBLP:series/lncs/ArrasAWMGMHS19,
  author    = {Leila Arras and Jose A. Arjona-Medina and Michael Widrich and Gr{\'{e}}goire Montavon and Michael Gillhofer and Klaus-Robert M{\"{u}}ller and Sepp Hochreiter and Wojciech Samek},
  booktitle = {Explainable {AI:} Interpreting, Explaining and Visualizing Deep Learning},
  publisher = {Springer},
  title     = {Explaining and Interpreting {LSTM}s},
  year      = {2019},
  __editor    = {Wojciech Samek and Gr{\'{e}}goire Montavon and Andrea Vedaldi and Lars Kai Hansen and Klaus-Robert M{\"{u}}ller},
  pages     = {211--238},
  __series    = {Lecture Notes in Computer Science},
  volume    = {11700},
  bibsource = {dblp computer science bibliography, https://dblp.org},
  biburl    = {https://dblp.org/rec/__series/lncs/ArrasAWMGMHS19.bib},
  __doi       = {10.1007/978-3-030-28954-6_11},
  timestamp = {Sat, 30 Sep 2023 10:30:32 +0200},
}

@article{boran2020dataset,
  title={Dataset of human medial temporal lobe neurons, scalp and intracranial EEG during a verbal working memory task},
  author={Boran, Ece and Fedele, Tommaso and Steiner, Adrian and Hilfiker, Peter and Stieglitz, Lennart and Grunwald, Thomas and Sarnthein, Johannes},
  journal={Scientific data},
  volume={7},
  number={1},
  pages={30},
  year={2020},
  publisher={Nature Publishing Group UK London},
doi={10.1038/s41597-020-0364-3}
}

@article{guillemaud2025hyperbolic,
  title={Hyperbolic embedding of brain networks as a tool for epileptic seizures forecasting},
  author={Guillemaud, Martin and Cousyn, Louis and Navarro, Vincent and Chavez, Mario},
  journal={Physical Review Research},
  volume={7},
  number={2},
  pages={023182},
  year={2025},
  publisher={APS}
}

@inproceedings{DBLP:conf/iclr/BdeirSL24,
  author       = {Ahmad Bdeir and
                  Kristian Schwethelm and
                  Niels Landwehr},
  title        = {Fully Hyperbolic Convolutional Neural Networks for Computer Vision},
  booktitle    = {{ICLR}},
  publisher    = {OpenReview.net},
  year         = {2024}
}

@article{DBLP:journals/pr/VielhabenLMS24,
  author       = {Johanna Vielhaben and
                  Sebastian Lapuschkin and
                  Gr{\'{e}}goire Montavon and
                  Wojciech Samek},
  title        = {Explainable {AI} for time series via Virtual Inspection Layers},
  journal      = {Pattern Recognit.},
  volume       = {150},
  pages        = {110309},
  year         = {2024}
}

@inproceedings{DBLP:conf/icml/NickelK18,
  author       = {Maximilian Nickel and
                  Douwe Kiela},
  title        = {Learning Continuous Hierarchies in the Lorentz Model of Hyperbolic
                  Geometry},
  booktitle    = {{ICML}},
  series       = {Proceedings of Machine Learning Research},
  volume       = {80},
  pages        = {3776--3785},
  publisher    = {{PMLR}},
  year         = {2018}
}

@Article{Yang2025,
author={Yang, Jiayi
and Cao, Dan
and Guo, Chunyan
and Stieglitz, Lennart
and Ledergerber, Debora
and Sarnthein, Johannes
and Li, Jin},
title={Enhanced role of the entorhinal cortex in adapting to increased working memory load},
journal={Nature Communications},
year={2025},
month={Jul},
day={01},
volume={16},
number={1},
pages={5798},
issn={2041-1723},
}
\bibliographystyle{iclr2027_conference}

\clearpage
\appendix
\section{Poincar\'e and Lorentz Representations}
\label{app:lorentz}
For curvature $-c$ with $c>0$, the Poincar\'e ball and Lorentz hyperboloid are
\begin{align}
\mathbb D_c^d&=\{\bm x_P\in\mathbb R^d:c\|\bm x_P\|_2^2<1\},\\
\mathbb L_c^d&=\{(x_{L,0},\bar{\bm x}_L)\in\mathbb R^{d+1}:-x_{L,0}^2+\|\bar{\bm x}_L\|_2^2=-1/c,\ x_{L,0}>0\}.
\end{align}
Their origins are $\bm o_P=\bm0$ and $\bm o_L=(c^{-1/2},\bm0)$. 
The standard Poincar\'e-Lorentz isometry and its inverse \citep{DBLP:conf/icml/NickelK18}, rescaled here to curvature $-c$, are 
\begin{align}
\phi(\bm x_P)&=\left(\frac{1+c\|\bm x_P\|_2^2}{\sqrt c(1-c\|\bm x_P\|_2^2)},\frac{2\bm x_P}{1-c\|\bm x_P\|_2^2}\right), \label{eq:poincare-lorentz}\\
\phi^{-1}(\bm x_L)&=\frac{\bar{\bm x}_L}{\sqrt c\,x_{L,0}+1}. \label{eq:inverse-phi}
\end{align}
Since $x_{L,0}=\sqrt{c^{-1}+\|\bar{\bm x}_L\|_2^2}$, both spatial conversions are radial:
\begin{equation}
\bar{\bm x}_L=\gamma(\bm x_P)\bm x_P,\qquad \gamma(\bm x_P)=\frac{2}{1-c\|\bm x_P\|_2^2},\qquad
\bm x_P=\frac{\bar{\bm x}_L}{\sqrt{1+c\|\bar{\bm x}_L\|_2^2}+1}.
\label{eq:lorentz-spatial}
\end{equation}

\section{Multiplicative Attribution and Grouping Invariance}
\label{app:multiplicative-classification}
Origin-centered hyperbolic maps and spatial coordinate conversions admit a scalar--signal structure, as in Eq.~\ref{eq:aligned-log}. This raises an attribution question: should relevance be shared between the geometric factor and the signal, or assigned entirely to the signal? Conservation alone does not determine this choice. To clarify the role of multiplicative structure, we first study attribution to scalar factors with equal explanatory status. This analysis complements the signal-only formulation used by LRP-radial-all.

\subsection{Local rules and grouping invariance}
For a scalar product $y=ab$ with $a,b>1$, consider a local rule that is linear in incoming relevance and whose coefficients depend only on the current factor values. Every conservative rule in this class has the form
\begin{equation}
R_a=w(a,b)R_y, \qquad R_b=[1-w(a,b)]R_y.
\label{eq:local-multiplicative-rule}
\end{equation}
The same coefficient function $w$ is used at every multiplication. Symmetric treatment of the factors additionally requires $w(a,b)=1-w(b,a)$.

\begin{definition}[Multiplicative grouping invariance]
\label{def:multiplicative-invariance}
A local relevance rule is multiplicatively grouping-invariant if, for every fixed ordered list of factors and incoming relevance, all binary parenthesizations of their product yield identical relevance at each original factor.
\end{definition}

This criterion depends on associative regrouping with the original explanatory factors held fixed. 
Equal splitting is symmetric and conservative but fails this criterion: for $y=abc$, the parenthesizations $(ab)c$ and $a(bc)$ assign relevance proportions $(1/4,1/4,1/2)$ and $(1/2,1/4,1/4)$, respectively.

More generally, propagation through $(ab)c$ assigns the coefficients
\begin{equation}
\bigl(w(ab,c)w(a,b),\; w(ab,c)[1-w(a,b)],\; 1-w(ab,c)\bigr)
\end{equation}
to $(R_a,R_b,R_c)$ relative to $R_y$, whereas propagation through $a(bc)$ assigns
\begin{equation}
\bigl(w(a,bc),\; [1-w(a,bc)]w(b,c),\; [1-w(a,bc)][1-w(b,c)]\bigr).
\end{equation}
Grouping invariance therefore requires, for every $a,b,c>1$,
\begin{align}
w(ab,c)w(a,b)&=w(a,bc), \label{eq:grouping-first}\\
w(ab,c)[1-w(a,b)]&=[1-w(a,bc)]w(b,c), \label{eq:grouping-middle}\\
1-w(ab,c)&=[1-w(a,bc)][1-w(b,c)]. \label{eq:grouping-last}
\end{align}
These identities are also sufficient: any two binary parenthesizations are connected by elementary reassociations, each of which preserves the relevance entering its three sub-expressions.

\subsection{Characterization of symmetric factor attribution}
\begin{theorem}[Continuous symmetric grouping-invariant attribution]
\label{thm:symmetric-multiplication}
Let $w:(1,\infty)^2\to\mathbb R$ be continuous. The conservative local rule in Eq.~\ref{eq:local-multiplicative-rule} is symmetric and multiplicatively grouping-invariant if and only if
\begin{equation}
R_a=\frac{\log a}{\log a+\log b}R_y, \qquad R_b=\frac{\log b}{\log a+\log b}R_y.
\label{eq:log-proportional-rule}
\end{equation}
\end{theorem}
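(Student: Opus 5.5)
The plan is to prove sufficiency by direct verification and necessity by pinning $w$ down on a dense set using only symmetry and grouping invariance, then extending by continuity.

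\emph{Sufficiency.} Write $u=\log a$, $v=\log b$, $t=\log c$, all positive, so that the rule in Eq.~\ref{eq:log-proportional-rule} is $w(a,b)=u/(u+v)$. It is continuous on $(1,\infty)^2$, conservative by construction, and symmetric since $u/(u+v)=1-v/(v+u)$. For grouping invariance it suffices, by the remark after Eq.~\ref{eq:grouping-last}, to check Eqs.~\ref{eq:grouping-first}--\ref{eq:grouping-last}. Because $\log(ab)=u+v$, each identity reduces to an algebraic check; for instance Eq.~\ref{eq:grouping-first} reads $\frac{u+v}{u+v+t}\cdot\frac{u}{u+v}=\frac{u}{u+v+t}$. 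The other two identities are equally short: each side equals $v/(u+v+t)$, respectively $t/(u+v+t)$.

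\emph{Necessity, step 1: equal factors.} Assume $w$ is continuous, symmetric and grouping-invariant. Fix $a>1$ and consider the ordered list of $n$ copies of $a$. By grouping invariance, every parenthesization gives the same shares $p_1,\dots,p_n$, which sum to $1$. I will show by induction on $n$ that $p_i=1/n$.
\begin{itemize}
\item For $n=2$, symmetry gives $w(a,a)=1/2$.
\item For $n\ge3$, assume the claim for all shorter lists of equal factors. Use the parenthesization $a\cdot(a^{n-1})$, with the block $a^{n-1}$ parenthesized arbitrarily. Then $p_1=w(a,a^{n-1})=:q$, and by the induction hypothesis the block's share $1-q$ is spread uniformly, so $p_2=\dots=p_n=(1-q)/(n-1)$.
\item Use instead $(a^{n-1})\cdot a$. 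Symmetry gives $p_n=1-w(a^{n-1},a)=w(a,a^{n-1})=q$, and $p_1=\dots=p_{n-1}=(1-q)/(n-1)$.
\item Since $n\ge3$, comparing the two expressions for $p_1$ gives $q=(1-q)/(n-1)$, hence $q=1/n$ and all shares equal $1/n$.
\end{itemize}
Now group the list as $(a^k)(a^m)$. The left block receives $w(a^k,a^m)$, which must equal the sum $k/(k+m)$ of its individual shares. Hence
\[
w(a^k,a^m)=\frac{k}{k+m}\quad\text{for all integers } k,m\ge1 \text{ and all } a>1 .
\]

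\emph{Necessity, step 2: density and continuity.} Take $x,y>1$ with $\log x/\log y=k/m$ rational. Set $a=x^{1/k}=y^{1/m}>1$. Step 1 gives $w(x,y)=k/(k+m)=\log x/(\log x+\log y)$. The set of such pairs is dense in $(1,\infty)^2$. Both $w$ and the log-proportional formula are continuous, so they agree everywhere, which is Eq.~\ref{eq:log-proportional-rule}.

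The main obstacle is step 1. One must make sure the share of a sub-block is distributed internally by the same rule, so that the induction hypothesis applies to the inner block, and that the two boundary parenthesizations used are legitimate. Both follow from the definition of local propagation together with grouping invariance: the relevance entering a sub-expression is redistributed by the same local rule, and all parenthesizations yield identical shares.
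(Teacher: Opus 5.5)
Your proof is correct and follows the paper's route. Both arguments obtain equal shares $1/n$ for $n$ identical factors, then $w(t^m,t^n)=m/(m+n)$ from a root split, then the formula for rational log-ratios, then extend by continuity. The differences are minor: you get the equal shares by induction using the two extreme parenthesizations, whereas the paper makes each adjacent pair siblings and uses $w(t,t)=1/2$; and you check sufficiency through the three reassociation identities, whereas the paper telescopes the logarithmic sums along root-to-leaf paths.
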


\begin{proof}
Take incoming relevance $R_y=1$ and consider a product of $N$ identical factors $t>1$. Any adjacent pair can be made siblings by changing the parenthesization. Since symmetry gives $w(t,t)=1/2$, the two factors receive equal relevance in that parenthesization. Grouping invariance transfers this equality to every parenthesization. All adjacent factors therefore receive equal relevance, and conservation implies an allocation of $1/N$ to each factor.

Now consider $m+n$ identical factors, where $m,n$ are positive integers, and choose a tree whose root separates the first $m$ factors from the remaining $n$. Because the rule depends only on the two current factor values, the root assigns relevance $w(t^m,t^n)$ to the first subtree. Conservation within this subtree gives
\begin{equation}
w(t^m,t^n)=\frac{m}{m+n}.
\end{equation}
Consequently, whenever $\log a/\log b=m/n$, choosing $t=a^{1/m}=b^{1/n}$ yields
\begin{equation}
w(a,b)=\frac{\log a}{\log a+\log b}.
\end{equation}
This argument uses grouping invariance on the fixed list of identical factors and locality at the root and does not require a separate assumption of invariance under factor expansion.

For arbitrary $a,b>1$, choose positive rational numbers $q_r\to\log a/\log b$. Since $b^{q_r}\to a$, continuity extends the identity $w(b^{q_r},b)=q_r/(1+q_r)$ to $w(a,b)=\log a/(\log a+\log b)$.

Conversely, the logarithmic rule is continuous, symmetric, and conservative on the stated domain. For any product $y=\prod_{i=1}^{N}a_i$, propagation through an arbitrary binary tree yields
\begin{equation}
R_{a_i}=\frac{\log a_i}{\sum_{j=1}^{N}\log a_j}R_y,
\end{equation}
because intermediate logarithmic sums cancel along each root-to-leaf path. The allocation is therefore independent of parenthesization.
\end{proof}

\subsection{Connection to radial attribution}
This characterization applies to continuous, symmetric, value-based rules on $(1,\infty)^2$. The logarithmic formula does not directly cover general neural activations, since it is undefined for nonpositive factors and singular when positive factors have product one. Our geometric modules instead distinguish a vector signal from its scalar modulation, motivating the asymmetric LRP-radial-all convention. Its invariance under splitting or merging radial modules differs from regrouping a fixed list of explanatory factors. Appendix~\ref{app:radial-characterization} analyzes this setting within the fixed-proportion radial family.

\section{Characterization of Fixed-Proportion Radial Rules}
\label{app:radial-characterization}

\subsection{Rule family and repeated propagation}

Consider nonzero radial modules $\bm y=\alpha(\|\bm x\|_2)\bm x$ in dimension $d\geq2$, with nonzero intermediate signals. We restrict attention to the rule family in Eq.~\ref{eq:eta-radial-family}, where $\eta\in[0,1]$ is a fixed constant shared across modules. The signal branch receives proportion $\eta$ of each output relevance, while the scalar branch receives the remaining total relevance and redistributes it according to squared input coordinates.

For compactness, define the matrix
\begin{equation}
P_{\bm x}=\frac{\bm x\odot\bm x}{\|\bm x\|_2^2}\one^\top, \qquad T_{\eta,\bm x}=\eta I_d+(1-\eta)P_{\bm x},
\end{equation}
where $\odot$ denotes element-wise multiplication. The rule is $\bm R_{\bm x}=T_{\eta,\bm x}\bm R_{\bm y}$. Since the normalized squared coordinates sum to one,
\begin{equation}
P_{\bm x}^2=P_{\bm x}, \qquad \one^\top P_{\bm x}=\one^\top.
\end{equation}
Furthermore, $P_{a\bm x}=P_{\bm x}$ for every nonzero scalar $a$. Thus every module along a nonzero radial chain has the same squared-coordinate redistribution matrix.

\begin{proposition}[Factorization invariance within the fixed-proportion family]
\label{prop:eta-factorization}
For $d\geq2$, a rule in Eq.~\ref{eq:eta-radial-family} is invariant, for every nonzero input and incoming relevance, under replacement of one radial module by an equivalent chain of nonzero radial modules if and only if $\eta\in\{0,1\}$.
\end{proposition}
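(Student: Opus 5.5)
The plan is to reduce everything to powers of a single matrix, using the algebra of $P_{\bm x}$ established just above. Along a chain of $m$ nonzero radial modules, every intermediate signal is a nonzero scalar multiple of the input $\bm x$. Since $P_{a\bm x}=P_{\bm x}$, each module in the chain therefore has the same backward operator $T_{\eta,\bm x}$. The chain's total backward map is then $T_{\eta,\bm x}^m$.

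Expand this power with the binomial theorem. Because $I_d$ and $P_{\bm x}$ commute and $P_{\bm x}^k=P_{\bm x}$ for $k\ge1$, we get
\begin{equation}
T_{\eta,\bm x}^m=\eta^m I_d+(1-\eta^m)P_{\bm x}.
\end{equation}
A single module corresponds to $m=1$. Invariance under replacement by an equivalent chain of length $m\ge2$ therefore amounts to
\begin{equation}
(\eta^m-\eta)(I_d-P_{\bm x})\bm R_{\bm y}=\bm 0
\end{equation}
for every nonzero $\bm x$ and every $\bm R_{\bm y}$.

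\emph{Sufficiency.} If $\eta\in\{0,1\}$, then $\eta^m=\eta$ for every $m\ge1$, so the identity holds and invariance follows. For $\eta=1$ this is just Proposition~\ref{prop:radial}.

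\emph{Necessity.} I need two ingredients.

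First, an admissible equivalent chain with $m=2$ must exist. Given any nonzero radial module $\alpha(\|\bm x\|_2)\bm x$, I will write it as the composition of the identity-scaled module $\bm x\mapsto 1\cdot\bm x$ with the original module. Alternatively, I can split $\alpha=\alpha_1\alpha_2$, for example with $\alpha_1\equiv2$ and $\alpha_2=\alpha(2\|\bm x\|_2)/2$ evaluated appropriately. Either way, both factors are nonzero radial maps and the intermediate signal stays nonzero.

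Second, I need $I_d-P_{\bm x}\neq 0$ for some nonzero $\bm x$; this is where $d\ge2$ is used. Take $\bm x=\bm e_1$ and $\bm R_{\bm y}=\bm e_2$. Then $P_{\bm e_1}\bm e_2=\bm e_1$, so $(I_d-P_{\bm x})\bm R_{\bm y}=\bm e_2-\bm e_1\neq\bm0$.

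With both in hand, invariance forces $\eta^2=\eta$, i.e.\ $\eta\in\{0,1\}$.

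The only delicate point is the first ingredient: making sure that the equivalent chains used in the necessity direction are legitimate under the statement's conventions, namely that they consist of nonzero radial modules depending only on radius. I would handle this with the explicit two-module split described above, so that the counterexample stays inside the allowed class. Everything else is routine idempotent algebra.
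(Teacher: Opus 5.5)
Your proposal is correct and follows essentially the paper's proof. Both use $P_{a\bm x}=P_{\bm x}$ and idempotence to get $T_{\eta,\bm x}^m=\eta^m I_d+(1-\eta^m)P_{\bm x}$, compare one module with two to get $(\eta^2-\eta)(I_d-P_{\bm x})=0$, and use $d\geq2$ to force $\eta\in\{0,1\}$; your witness $\bm x=\bm e_1$, $\bm R_{\bm y}=\bm e_2$ and the unit-scaling split only make explicit what the paper leaves implicit. One minor slip, which does not matter since the unit-scaling construction already suffices: in your alternative split the second factor should be $r\mapsto\alpha(r/2)/2$ on the intermediate radius (equivalently $\alpha(\|\bm x\|_2)/2$ in the original input), not $\alpha(2\|\bm x\|_2)/2$.
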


\begin{proof}
Since $P_{\bm x}$ is idempotent,
\begin{equation}
T_{\eta,\bm x}^m=\eta^m I_d+(1-\eta^m)P_{\bm x}
\end{equation}
for every $m\geq1$. In particular, invariance between one and two modules requires
\begin{equation}
T_{\eta,\bm x}^2-T_{\eta,\bm x}=(\eta^2-\eta)(I_d-P_{\bm x})=0.
\end{equation}
For $d\geq2$, $P_{\bm x}$ has rank one and cannot equal $I_d$. Therefore $\eta^2=\eta$, giving $\eta=0$ or $\eta=1$. Conversely, both choices yield idempotent backward operators, so repeated application does not change the propagated relevance.
\end{proof}

The case $\eta=1$ is radial-all. The case $\eta=0$ assigns all relevance to the scalar branch and then redistributes it according to squared input coordinates. Both rules conserve relevance and satisfy the stated radial factorization invariance, demonstrating that these two properties alone do not uniquely identify radial-all.

\subsection{Adding zero-curvature consistency}

\begin{corollary}[Unique joint consistency within the fixed-proportion family]
\label{cor:radial-all-characterization}
For $d\geq2$, LRP-radial-all is the unique member of the fixed-proportion family in Eq.~\ref{eq:eta-radial-family} satisfying both radial factorization invariance and zero-curvature consistency for the origin Poincar\'e maps in the fixed coordinates of Definition~\ref{def:zero-curvature}.
\end{corollary}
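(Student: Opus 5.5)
The plan is to combine Proposition~\ref{prop:eta-factorization} with a direct evaluation of the zero-curvature limit for the two surviving members of the family. By Proposition~\ref{prop:eta-factorization}, radial factorization invariance for $d\geq2$ already restricts $\eta$ to $\{0,1\}$. So it only remains to decide which of these two values satisfies Definition~\ref{def:zero-curvature} for the origin Poincar\'e maps.

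Uniqueness then amounts to excluding $\eta=0$, and existence to verifying $\eta=1$.

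\textbf{Existence ($\eta=1$).} The backward operator is $T_{1,\bm x}=I_d$ for every $c$ and every nonzero $\bm x$. Hence $\mathcal R_c(\bm x,\bm R)=\bm R$ identically, and the limit as $c\to0$ is trivially $\bm R$. Note that the origin Poincar\'e maps $\exp_{\bm0}^c$ and $\log_{\bm0}^c$ are of the form $\alpha_c(\|\bm x\|_2)\bm x$ with $\alpha_c\to1$, so $k=1$ in Definition~\ref{def:zero-curvature}. At $\bm x=\bm0$ the rule is, by convention, identity propagation, so nothing changes there.

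\textbf{Exclusion ($\eta=0$).} The backward operator is $P_{\bm x}=\frac{\bm x\odot\bm x}{\|\bm x\|_2^2}\one^\top$. This depends only on the direction of $\bm x$, not on $c$. Therefore $\lim_{c\to0}\mathcal R_c(\bm x,\bm R)=P_{\bm x}\bm R$, and zero-curvature consistency would require $P_{\bm x}\bm R=\bm R$ for all $\bm x,\bm R$. Since $d\geq2$, $P_{\bm x}$ has rank one and cannot equal $I_d$. An explicit witness: take $\bm x=\bm e_1$ and $\bm R=\bm e_2$. Then $P_{\bm x}\bm R=\bm e_1\neq\bm e_2$, and this violates the definition for both the exponential and logarithmic origin maps. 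More generally, the same computation shows $\lim_{c\to0}T_{\eta,\bm x}=\eta I_d+(1-\eta)P_{\bm x}$, which equals $I_d$ only when $\eta=1$. This recovers the main-text remark about LRP-half.

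The main obstacle is minor and interpretive rather than technical. One must check that the fixed-coordinate setting of Definition~\ref{def:zero-curvature} treats the Poincar\'e radial map as a single module whose rule is the $\eta$-rule with $c$-independent coefficients, so that the limit is exactly the constant operator $T_{\eta,\bm x}$. One must also check that a single counterexample input suffices, since the definition quantifies over every $\bm x$ and $\bm R$. Once this is accepted, combining the two steps with Proposition~\ref{prop:eta-factorization} yields the claim.
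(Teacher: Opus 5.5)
Your proposal is correct and matches the paper's proof. Proposition~\ref{prop:eta-factorization} restricts the candidates to $\eta\in\{0,1\}$; because $T_{\eta,\bm x}$ does not depend on curvature, zero-curvature consistency becomes $T_{\eta,\bm x}=I_d$, which holds for $\eta=1$ and fails for $\eta=0$ since $P_{\bm x}$ has rank one when $d\geq2$. Your explicit witness $\bm x=\bm e_1$, $\bm R=\bm e_2$ is a useful addition, and so is your remark that $\eta I_d+(1-\eta)P_{\bm x}=I_d$ only for $\eta=1$, which shows that zero-curvature consistency alone already singles out radial-all within this family.
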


\begin{proof}
Proposition~\ref{prop:eta-factorization} restricts the candidates to $\eta=0$ and $\eta=1$. For fixed nonzero input and incoming relevance, $T_{\eta,\bm x}$ is independent of curvature. Zero-curvature (as in Definition~\ref{def:zero-curvature}) consistency therefore requires $T_{\eta,\bm x}\bm R=\bm R$ for every $\bm R$. The choice $\eta=1$ satisfies this requirement. The choice $\eta=0$ does not, since $P_{\bm x}\neq I_d$ for $d\geq2$. Hence only $\eta=1$ satisfies both criteria.
\end{proof}

\section{Origin Logarithmic Maps and the GRI Test}
\label{app:log}
\subsection{Poincar\'e branch}
Let $r_P=\|\bm x_P\|_2$. The origin logarithmic map has the radial form
\begin{equation}
\bm v=\log_{\bm o_P}^{P,c}(\bm x_P)=\alpha_P(\bm x_P)\bm x_P,\qquad \alpha_P(\bm x_P)=\frac{\operatorname{artanh}(\sqrt c\,r_P)}{\sqrt c\,r_P}.
\label{eq:poincare-log}
\end{equation}
The continuous value at the origin is $\alpha_P(\bm0)=1$.

\subsection{Lorentz branch and tangent alignment}
Let $\theta_L=\operatorname{arcosh}(\sqrt c\,x_{L,0})$. The  Lorentz logarithmic map is
\begin{equation}
\log_{\bm o_L}^{L,c}(\bm x_L)=\frac{\theta_L}{\sinh\theta_L}\left(\bm x_L-\sqrt c\,x_{L,0}\bm o_L\right)=\left(0,\frac{\theta_L}{\sinh\theta_L}\bar{\bm x}_L\right).
\label{eq:lorentz-log-ambient}
\end{equation}
To compare this output with the Poincar\'e branch, we express both in common tangent coordinates. The Jacobian of the isometry in Eq.~\ref{eq:poincare-lorentz} satisfies
\begin{equation}
\left.\frac{\partial\phi(\bm x_P)}{\partial\bm x_P}\right|_{\bm x_P=\bm0}\bm v
=\begin{pmatrix}\bm0^\top\\2I_d\end{pmatrix}\bm v
=(0,2\bm v).
\label{eq:tangent-alignment}
\end{equation}
Thus, the Lorentz spatial tangent component must be divided by two to recover the common coordinates:
\begin{equation}
\bm v=\frac12\left[\log_{\bm o_L}^{L,c}(\bm x_L)\right]_{\mathrm{sp}}=\alpha_L(\bm x_L)\bar{\bm x}_L,\qquad \alpha_L(\bm x_L)=\frac{\theta_L}{2\sinh\theta_L}.
\label{eq:lorentz-log}
\end{equation}
Here $[\cdot]_{\mathrm{sp}}$ extracts the spatial components. Let $r_L=\|\bar{\bm x}_L\|_2$, the hyperboloid constraint gives $\sqrt c\,x_{L,0}=\sqrt{1+cr_L^2}$ and $\sinh\theta_L=\sqrt c\,r_L$. Consequently,
\begin{equation}
\alpha_L(\bm x_L)=\frac{\operatorname{arsinh}(\sqrt c\,r_L)}{2\sqrt c\,r_L},
\end{equation}
so the aligned Lorentz logarithmic map is radial in the spatial coordinates, with $\alpha_L(\bm o_L)=1/2$.

\subsection{Forward equivalence}
For $\bm x_L=\phi(\bm x_P)$ and $t=\sqrt c\,r_P\in[0,1)$, the conversion formulas give
\begin{equation}
\sqrt c\,x_{L,0}=\frac{1+t^2}{1-t^2},\qquad \theta_L=2\operatorname{artanh}(t),\qquad \sinh\theta_L=\frac{2t}{1-t^2}.
\end{equation}
Together with $\bar{\bm x}_L=\gamma(\bm x_P)\bm x_P$,
\begin{equation}
\alpha_L(\phi(\bm x_P))\gamma(\bm x_P)=\frac{2\operatorname{artanh}(t)}{2[2t/(1-t^2)]}\frac{2}{1-t^2}=\frac{\operatorname{artanh}(t)}{t}=\alpha_P(\bm x_P).
\end{equation}
All expressions at $t=0$ are interpreted by continuity. The two paths therefore produce identical common tangent coordinates:
\begin{equation}
\bm v=\log_{\bm o_P}^{P,c}(\bm x_P)=\frac12\left[\log_{\bm o_L}^{L,c}(\phi(\bm x_P))\right]_{\mathrm{sp}}.
\label{eq:log-forward-equivalence}
\end{equation}

\subsection{Relevance equality in the controlled test}
Under the shared encoder, downstream computation, and relevance procedure specified in Section~\ref{sec:gri-test}, both paths receive the same $\bm R_{\bm v}$. LRP-radial-all propagates this relevance unchanged through each spatial radial module. The direct path contains the Poincar\'e logarithmic map, while the Lorentz path contains the aligned logarithmic map and the spatial coordinate conversion. Hence,
\begin{equation}
\bm R_{\bm x_P}^{P}=\bm R_{\bm v},\qquad \bm R_{\bm x_P}^{L}=\bm R_{\bar{\bm x}_L}=\bm R_{\bm v}.
\end{equation}
The Lorentz time coordinate is treated as a dependent geometric quantity within these modules. Identical relevance at $\bm x_P$, followed by the same deterministic propagation through the shared encoder, gives
\begin{equation}
\bm R_{\bm a}^{\mathrm{direct}}=\bm R_{\bm a}^{\mathrm{via}\,L}.
\end{equation}
LRP-radial-all therefore satisfies GRI for this specified logarithmic-map test.

\section{Exponential Maps and the Tests}
\label{app:exp}
\subsection{Poincar\'e exponential map}
Let $r_z=\|\bm z\|_2$. The origin exponential map has the radial form
\begin{equation}
\bm y_P=\exp_{\bm o_P}^{P,c}(\bm z)=\beta_P(\bm z)\bm z,\qquad \beta_P(\bm z)=\frac{\tanh(\sqrt c\,r_z)}{\sqrt c\,r_z}.
\label{eq:poincare-exp}
\end{equation}
The continuous value at the origin is $\beta_P(\bm0)=1$.

\subsection{Lorentz exponential map and alignment}
Under the tangent alignment in Eq.~\ref{eq:tangent-alignment}, the common coordinates $\bm z$ correspond to the ambient Lorentz tangent vector $(0,2\bm z)$. For an origin tangent vector $\bm u=(0,\bar{\bm u})$, with $\|\bm u\|_L=\|\bar{\bm u}\|_2$, the exponential map is
\begin{equation}
\exp_{\bm o_L}^{L,c}(\bm u)=\cosh(\sqrt c\,\|\bm u\|_L)\bm o_L+\frac{\sinh(\sqrt c\,\|\bm u\|_L)}{\sqrt c\,\|\bm u\|_L}\bm u.
\end{equation}
Substituting $\bm u=(0,2\bm z)$ gives
\begin{equation}
\bm y_L=\exp_{\bm o_L}^{L,c}((0,2\bm z))=\left(\frac{\cosh(2\sqrt c\,r_z)}{\sqrt c},\beta_L(\bm z)\bm z\right),\qquad \beta_L(\bm z)=\frac{\sinh(2\sqrt c\,r_z)}{\sqrt c\,r_z}.
\label{eq:lorentz-exp}
\end{equation}
Thus, the spatial output $\bar{\bm y}_L=\beta_L(\bm z)\bm z$ is radial, with $\beta_L(\bm0)=2$.

To verify forward equivalence, set $t=\sqrt c\,r_z$. Taking the norm of Eq.~\ref{eq:poincare-exp} gives $\|\bm y_P\|_2=\tanh(t)/\sqrt c$. Substitution into Eq.~\ref{eq:poincare-lorentz} then yields
\begin{align}
\phi(\bm y_P)
&=\left(\frac{1+\tanh^2t}{\sqrt c(1-\tanh^2t)},\frac{2\tanh t}{t(1-\tanh^2t)}\bm z\right)\notag\\
&=\left(\frac{\cosh(2t)}{\sqrt c},\frac{\sinh(2t)}{t}\bm z\right)
=\bm y_L,
\end{align}
where the expressions at $t=0$ are interpreted by continuity. The two exponential maps therefore produce corresponding geometric points when their tangent inputs are aligned.

\subsection{Relevance rules for the spatial radial modules}
Both $\bm y_P=\beta_P(\bm z)\bm z$ and $\bar{\bm y}_L=\beta_L(\bm z)\bm z$ have the scalar--signal structure of Eq.~\ref{eq:geometry-weight-rule}. We apply the same propagation rules as for the logarithmic maps, with $\bm z$ as the signal. The constant tangent alignment is included in $\beta_L$. LRP-radial-all and LRP-half are applicable here. 

\subsection{A controlled exponential-map test}
The exponential-map test compares two paths from the same tangent coordinates $\bm z$ to the same Lorentz point. The direct path applies the aligned Lorentz exponential map, whereas the indirect path applies the Poincar\'e exponential map followed by the coordinate conversion:
\begin{equation}
\bm y_L=\exp_{\bm o_L}^{L,c}((0,2\bm z))=\phi\!\left(\exp_{\bm o_P}^{P,c}(\bm z)\right).
\end{equation}
Both paths then use the same downstream computation $G(\bm y_L)$. With identical output relevance initialization and the same deterministic downstream propagation, they receive the same relevance $\bm R_{\bm y_L}=(R_{y_{L,0}},\bm R_{\bar{\bm y}_L})$.

\paragraph{Shared treatment of the time coordinate.}
In both realizations, we express the time coordinate through the same spatial constraint,
\begin{equation}
y_{L,0}=\sqrt{c^{-1}+\|\bar{\bm y}_L\|_2^2}.
\end{equation}
Any relevance assigned to this coordinate is propagated through the constraint using the same rule in both paths. For example, for $\bar{\bm y}_L\neq\bm0$, squared-coordinate redistribution gives
\begin{equation}
\widetilde R_{\bar y_{L,i}}=R_{\bar y_{L,i}}+\frac{\bar y_{L,i}^2}{\|\bar{\bm y}_L\|_2^2}R_{y_{L,0}}.
\label{eq:exp-time-redistribution}
\end{equation}
This convention conserves the combined spatial and time-coordinate relevance. At zero spatial input, a shared fallback must be specified. The equality argument below requires only that both paths use the same deterministic treatment of the time coordinate.

\paragraph{Relevance equality under LRP-radial-all.}
After the shared time-coordinate propagation, both paths receive the same effective spatial relevance $\widetilde{\bm R}_{\bar{\bm y}_L}$. The direct spatial map is $\bar{\bm y}_L=\beta_L(\bm z)\bm z$, while the indirect path contains two radial modules:
\begin{equation}
\bm y_P=\beta_P(\bm z)\bm z,\qquad
\bar{\bm y}_L=\gamma(\bm y_P)\bm y_P.
\end{equation}
Applying LRP-radial-all to each spatial radial module gives
\begin{equation}
\bm R_{\bm z}^{\mathrm{direct}}=\widetilde{\bm R}_{\bar{\bm y}_L},
\qquad
\bm R_{\bm z}^{\mathrm{via}\,P}=\bm R_{\bm y_P}=\widetilde{\bm R}_{\bar{\bm y}_L}.
\end{equation}
Thus, the two paths yield identical relevance at the common input $\bm z$. If $\bm z$ is produced by a shared encoder, identical deterministic propagation through that encoder also gives identical relevance at the original input.

\section{LRP-half: Conservation and Consistency Counterexamples}
\label{app:half}
This section analyzes the specified LRP-half baseline, which combines equal splitting at scalar--signal products with squared-norm redistribution of the scalar relevance. Although this rule conserves total relevance, it can violate GRI and zero-curvature consistency.

\subsection{Branch split, squared-norm redistribution, and conservation}
Consider a radial module $\bm y=\alpha(\|\bm x\|_2)\bm x$ with $\bm x\neq\bm0$. LRP-half assigns half of the incoming relevance to the signal branch and half to the scalar factor:
\begin{equation}
R_{x_i}^{\mathrm{sig}}=\frac12R_{y_i},\qquad R_\alpha=\frac12\sum_jR_{y_j}.
\end{equation}
Let $s=\|\bm x\|_2^2=\sum_jx_j^2$ and $r=\sqrt{s}$. We treat the scalar transformations $s\mapsto r\mapsto\alpha(r)$ as relevance-preserving, so $R_s=R_r=R_\alpha$. Contribution-proportional redistribution through the sum, followed by relevance-preserving propagation through each scalar square, gives
\begin{equation}
R_{x_i}^{\mathrm{rad}}=\frac{x_i^2}{\|\bm x\|_2^2}R_s.
\end{equation}
Combining the two branches yields
\begin{equation}
R_{x_i}=\frac12R_{y_i}+\frac12\frac{x_i^2}{\|\bm x\|_2^2}\sum_jR_{y_j}.
\label{eq:lrp_half_radial}
\end{equation}
Since the squared-coordinate weights sum to one, the rule conserves signed total relevance:
\begin{equation}
\sum_iR_{x_i}=\frac12\sum_iR_{y_i}+\frac12\sum_jR_{y_j}=\sum_jR_{y_j}.
\end{equation}
At $\bm x=\bm0$, the redistribution weights are undefined and require a separate convention, such as identity propagation. The counterexamples below use nonzero inputs and are independent of this choice. Numerical stabilization is discussed in Appendix~\ref{app:numerics}.

\begin{figure}[t]
\centering
\includegraphics[width=0.4\linewidth]{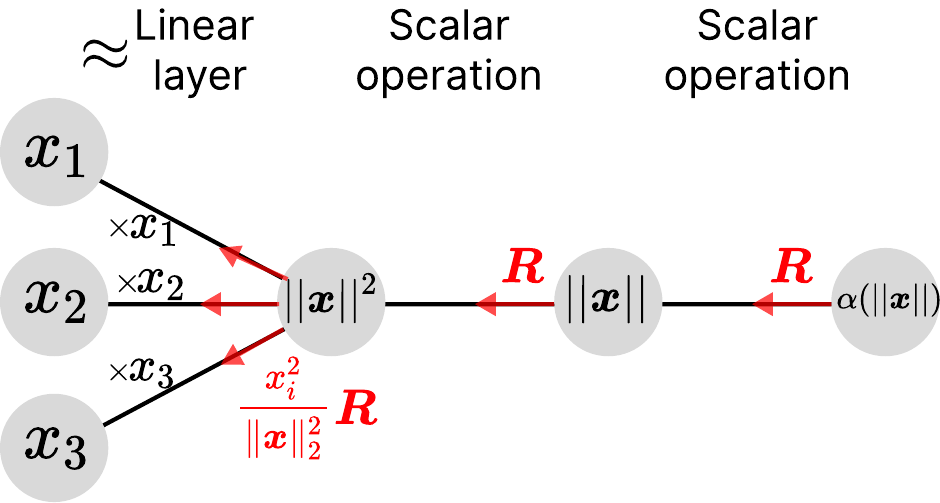}
\caption{Redistribution of scalar-branch relevance from $\alpha(\|\bm x\|_2)$ to $\bm x$ through the squared norm.}
\label{fig:alpha_x}
\end{figure}

\subsection{Factorization dependence and the GRI counterexample}
Consider the aligned logarithmic-map paths in Eq.~\ref{eq:aligned-log}, with $\bm x_P\neq\bm0$ and identical incoming relevance $\bm R_{\bm v}$. The direct Poincar\'e path contains one radial module and gives
\begin{equation}
R_{x_{P,i}}^{P}=\frac12R_{v_i}+\frac12\frac{x_{P,i}^2}{\|\bm x_P\|_2^2}\sum_jR_{v_j}.
\end{equation}
The Lorentz path contains two radial modules: the spatial conversion $\bar{\bm x}_L=\gamma(\bm x_P)\bm x_P$ and the aligned logarithmic map. Because the conversion scales every coordinate by the same nonzero scalar,
\begin{equation}
\frac{\bar{x}_{L,i}^2}{\|\bar{\bm x}_L\|_2^2}=\frac{x_{P,i}^2}{\|\bm x_P\|_2^2}.
\label{eq:half-radial-weights}
\end{equation}
Propagation through the aligned Lorentz logarithmic map first gives
\begin{equation}
R_{\bar{x}_{L,i}}=\frac12R_{v_i}+\frac12\frac{\bar{x}_{L,i}^2}{\|\bar{\bm x}_L\|_2^2}\sum_jR_{v_j},\qquad
\sum_jR_{\bar{x}_{L,j}}=\sum_jR_{v_j}.
\label{eq:lorentz_half}
\end{equation}
Applying the rule again through the spatial conversion yields
\begin{equation}
\begin{aligned}
R_{x_{P,i}}^{L}
&=\frac12R_{\bar{x}_{L,i}}+\frac12\frac{x_{P,i}^2}{\|\bm x_P\|_2^2}\sum_jR_{\bar{x}_{L,j}}\\
&=\frac12\left(\frac12R_{v_i}+\frac12\frac{x_{P,i}^2}{\|\bm x_P\|_2^2}\sum_jR_{v_j}\right)
+\frac12\frac{x_{P,i}^2}{\|\bm x_P\|_2^2}\sum_jR_{v_j}\\
&=\frac14R_{v_i}+\frac34\frac{x_{P,i}^2}{\|\bm x_P\|_2^2}\sum_jR_{v_j}.
\end{aligned}
\end{equation}
The constant tangent alignment is included in $\alpha_L$ and is not treated as an additional half-splitting node. Both paths conserve the same total relevance, but their coordinate-wise difference is
\begin{equation}
R_{x_{P,i}}^{L}-R_{x_{P,i}}^{P}=\frac14\left(\frac{x_{P,i}^2}{\|\bm x_P\|_2^2}\sum_jR_{v_j}-R_{v_i}\right).
\end{equation}
The allocations coincide if and only if $R_{v_i}=(x_{P,i}^2/\|\bm x_P\|_2^2)\sum_jR_{v_j}$ for every $i$.

\paragraph{An explicit counterexample at the original input.}
Choose the shared encoder to be the identity on the Poincar\'e ball, so that $\bm a=\bm x_P$. Let $c=1$, $\bm x_P^\ast=(1/4,1/4)^\top$, and $\bm v^\ast=\log_{\bm o_P}^{P,1}(\bm x_P^\ast)$. Use the shared linear head $G(\bm v)=v_1/v_1^\ast$, where $v_1^\ast$ is fixed. At this input, the output score is one, and LRP-$0$ through the head with unit relevance initialization gives $\bm R_{\bm v}=(1,0)^\top$. The two paths yield
\begin{equation}
\bm R_{\bm a}^{\mathrm{direct}}=(3/4,1/4)^\top,\qquad
\bm R_{\bm a}^{\mathrm{via}\,L}=(5/8,3/8)^\top.
\end{equation}
Both allocations sum to one but differ at the same original input. Since the aligned paths implement the same function throughout the common domain, this is a counterexample to GRI for the specified LRP-half baseline.

\subsection{Failure of zero-curvature consistency}
For fixed inputs, the origin Poincar\'e logarithmic and exponential maps satisfy
\begin{align}
\log_{\bm o_P}^{P,c}(\bm x)&=\left(1+\frac{c}{3}\|\bm x\|_2^2+O(c^2)\right)\bm x,\\
\exp_{\bm o_P}^{P,c}(\bm z)&=\left(1-\frac{c}{3}\|\bm z\|_2^2+O(c^2)\right)\bm z.
\end{align}
Both converge to the identity as $c\to0$. However, for fixed nonzero input and incoming relevance, the LRP-half allocation in Eq.~\ref{eq:lrp_half_radial} is independent of curvature. Consequently,
\begin{equation}
\lim_{c\to0}\left(R_{x_i}-R_{y_i}\right)=\frac12\left(\frac{x_i^2}{\|\bm x\|_2^2}\sum_jR_{y_j}-R_{y_i}\right),
\end{equation}
which is generally nonzero. Although the radial factor approaches one, the rule continues to redistribute half of the incoming relevance through the radial branch. The same argument applies to the exponential map with $\bm z$ as the input signal. Thus, the specified LRP-half baseline fails zero-curvature consistency for both Poincar\'e origin maps. 

\section{Bias, M\"obius Addition, and Matched Lorentz Realization}
\label{app:bias}

\subsection{Poincar\'e bias operation}

Let $\bm b$ be a fixed tangent-space parameter and $\bm b_P=\exp_{\bm o_P}^{P,c}(\bm b)$. For $\bm m_P=\bm W\otimes_c\bm x_P$, define the bias operation as
\begin{equation}
\bm y_P=T_{\bm b_P}^{P}(\bm m_P)=\bm m_P\oplus_c\bm b_P.
\end{equation}
M\"obius addition can also be represented in the scalar-vector product form
\begin{equation}
\bm m_P\oplus_c\bm b_P=\alpha^\oplus_c(\bm m_P,\bm b_P)\bm m_P+\beta^\oplus_c(\bm m_P,\bm b_P)\bm b_P,
\label{eq:mobius-bias}
\end{equation}
where
\begin{equation}
\begin{aligned}
\alpha^\oplus_c(\bm m_P,\bm b_P)
&=\frac{1+2c\langle\bm m_P,\bm b_P\rangle+c\|\bm b_P\|_2^2}{1+2c\langle\bm m_P,\bm b_P\rangle+c^2\|\bm m_P\|_2^2\|\bm b_P\|_2^2},\\
\beta^\oplus_c(\bm m_P,\bm b_P)
&=\frac{1-c\|\bm m_P\|_2^2}{1+2c\langle\bm m_P,\bm b_P\rangle+c^2\|\bm m_P\|_2^2\|\bm b_P\|_2^2}.
\end{aligned}
\label{eq:mobius-addition}
\end{equation}
Although $\bm b_P$ is fixed, both coefficients depend on the input $\bm m_P$. The bias contribution generally changes the output direction, so M\"obius bias addition is not an origin-centered radial scaling of the signal.

\subsection{Signal-only relevance convention}

We explain the prediction in terms of input features rather than fixed model parameters. In this spirit, we suggest one way to propagate relevance conservatively:
\begin{equation}
\bm R_{\bm b_P}=\bm0, \qquad \bm R_{\bm m_P}=\bm R_{\bm y_P}.
\label{eq:mobius-bias-lrp}
\end{equation}
Note it is a separate attribution convention and not a consequence of the radial factorization result.

\subsection{Matched Lorentz realization}

An exactly equivalent forward bias operation can be constructed through the Poincar\'e--Lorentz isometry $\phi$. Set $\bm m_L=\phi(\bm m_P)$ and $\bm b_L=\phi(\bm b_P)$, and define
\begin{align}
T_{\bm b_L}^{L}&=\phi\circ T_{\bm b_P}^{P}\circ\phi^{-1}, \label{eq:lorentz-bias-conjugation}\\
T_{\bm b_L}^{L}(\bm m_L)&=\phi\!\left(\phi^{-1}(\bm m_L)\oplus_c\bm b_P\right).
\end{align}
The resulting computation is
\begin{equation}
\bm m_L\xrightarrow{\phi^{-1}}\bm m_P\xrightarrow{\oplus_c\bm b_P}\bm y_P\xrightarrow{\phi}\bm y_L.
\end{equation}
By construction, $T_{\bm b_L}^{L}(\phi(\bm m_P))=\phi(T_{\bm b_P}^{P}(\bm m_P))$, so the two realizations implement the same operation in different geometric representations. 

\section{Tangent-space propagation and numerical stabilization}
\label{app:numerics}

For a bias-free tangent-space linear transformation $\bm z=\bm W\bm v$, the unstabilized LRP-$0$ rule is
\begin{equation}
R_{v_i}=\sum_j\frac{v_iW_{ji}}{z_j}R_{z_j}, \qquad z_j=\sum_kv_kW_{jk}.
\label{eq:linear-lrp}
\end{equation}
Assuming $z_j\neq0$ for every output receiving nonzero relevance, and omitting zero-relevance outputs, summation over the inputs gives exact conservation:
\begin{equation}
\sum_iR_{v_i}=\sum_j\frac{\sum_iv_iW_{ji}}{z_j}R_{z_j}=\sum_jR_{z_j}.
\end{equation}

Numerical stabilization generally introduces a conservation residual. For LRP-$\epsilon$ with $\epsilon>0$, replace the denominator by $\widetilde z_j=z_j+\epsilon\operatorname{sgn}_{+}(z_j)$, where $\operatorname{sgn}_{+}(z)=1$ for $z\geq0$ and $-1$ otherwise. The propagated total becomes
\begin{equation}
\sum_iR_{v_i}=\sum_j\frac{z_j}{z_j+\epsilon\operatorname{sgn}_{+}(z_j)}R_{z_j}.
\end{equation}
Consequently, the signed conservation residual is
\begin{equation}
\sum_jR_{z_j}-\sum_iR_{v_i}=\sum_j\frac{\epsilon}{|z_j|+\epsilon}R_{z_j}.
\end{equation}
Exact conservation is therefore not guaranteed unless the residual vanishes or is explicitly handled, for example as in NRM \citep{DBLP:journals/corr/abs-2606-00557}.

\section{Geometric Representation Invariance Experiment}
\label{sec:exp_geometry}
\paragraph{Setup.}
We consider curvature $-c$ with $c=1$ and choose the shared encoder to be the identity on the Poincar\'e ball, so that the original input is $\bm a=\bm x_P$. We evaluate at $\bm x_P=(0.2,0.3,-0.1)^\top$. Its Lorentz representation is
\begin{equation}
\bm x_L=\phi(\bm x_P)=\left(\frac{1+c\|\bm x_P\|_2^2}{\sqrt c(1-c\|\bm x_P\|_2^2)},\frac{2\bm x_P}{1-c\|\bm x_P\|_2^2}\right).
\end{equation}
The direct Poincar\'e path and the aligned Lorentz path reach the same tangent coordinates:
\begin{equation}
\bm v=\log_{\bm o_P}^{P,c}(\bm x_P)=\frac12\left[\log_{\bm o_L}^{L,c}\!\left(\phi(\bm x_P)\right)\right]_{\mathrm{sp}},
\qquad \bm o_P=\bm0,\quad \bm o_L=(c^{-1/2},\bm0).
\end{equation}
Here $[\cdot]_{\mathrm{sp}}$ extracts the spatial components, and the factor $1/2$ converts the ambient Lorentz tangent vector to the common coordinates, as specified in Eq.~\ref{eq:tangent-alignment}. Both paths use the shared scalar head $G(\bm v)=v_1$ and implement the same function throughout the Poincar\'e ball. Initializing output relevance with the target score and applying LRP-$0$ through this head gives $\bm R_{\bm v}=(v_1,0,0)^\top$.

\paragraph{Relevance propagation.}
For a radial module $\bm y=\alpha(\|\bm x\|_2)\bm x$, LRP-radial-all propagates $\bm R_{\bm x}=\bm R_{\bm y}$, whereas the specified LRP-half baseline uses
\begin{equation}
R_{x_i}=\frac12R_{y_i}+\frac12\frac{x_i^2}{\|\bm x\|_2^2}\sum_jR_{y_j},
\qquad \bm x\neq\bm0.
\label{eq:exp_half}
\end{equation}
Both rules conserve signed total relevance locally. The direct path contains one radial module, $\bm v=\alpha_P(\bm x_P)\bm x_P$. The Lorentz path contains two: the spatial conversion $\bar{\bm x}_L=\gamma(\bm x_P)\bm x_P$ and the aligned logarithmic map $\bm v=\alpha_L(\bm x_L)\bar{\bm x}_L$. We apply each rule consistently to these modules and compare the resulting relevance at the common original input $\bm a=\bm x_P$. The tangent alignment factor is included in $\alpha_L$, rather than treated as an additional multiplication node. The Lorentz time coordinate is treated as a dependent geometric quantity within the radial modules.

\begin{table}[t]
\centering
\small
\caption{Relevance at the common original input $\bm a=\bm x_P$ for the two equivalent logarithmic-map paths. Values are rounded to six decimal places.}
\label{tab:geometry_invariance}
\begin{tabular}{llrrr}
\toprule
Rule & Path & $R_{a_1}$ & $R_{a_2}$ & $R_{a_3}$ \\
\midrule
LRP-radial-all & Direct & 0.210205 & 0 & 0 \\
& Via Lorentz & 0.210205 & 0 & 0 \\
LRP-half & Direct & 0.135132 & 0.067566 & 0.007507 \\
& Via Lorentz & 0.097595 & 0.101349 & 0.011261 \\
\bottomrule
\end{tabular}
\end{table}

\paragraph{Results.}
Table~\ref{tab:geometry_invariance} shows identical input relevance for LRP-radial-all across the two paths. In contrast, LRP-half produces different relevances, although both allocations sum to the same target score $v_1$. This numerical example illustrates the counterexample derived in Appendix~\ref{app:half}: local conservation does not guarantee GRI, because equivalent radial factorizations can yield different relevance allocations at the same original input.

\section{Validate sEEG relevant window explanation by perturbation}
\label{app:seeg_validation}

To assess whether the relevant time windows in the LRP heatmaps identify input regions relevant to the network's predictions, we conduct a window perturbation test. We compute trial-wise relevance for the target class logit and averaged the maps within each class.
The average input time series (3 seconds long) is divided into non-overlapping 100-ms windows, ranked by their signed relevance summed across channels and time.
We perturb the top $k \in \{1,2,3,5\}$ windows in each original trial by replacing the signal within these windows, across all channels, with each channel's original temporal mean. Randomly selected windows serve as a control, averaged over 20 repetitions. The results are summarized in Figure~\ref{fig:window_perturbation}.
LRP-guided perturbation produces larger mean reductions in target probabilities than random perturbation for both classes. These results indicate that the temporal relevance estimated by LRP is aligned with the model's predictive behavior and can effectively identify time periods that are important for the classification decision. This further supports the practical utility of the relevance heatmaps.  
\begin{figure}
    \centering
    \includegraphics[width=0.6\linewidth]{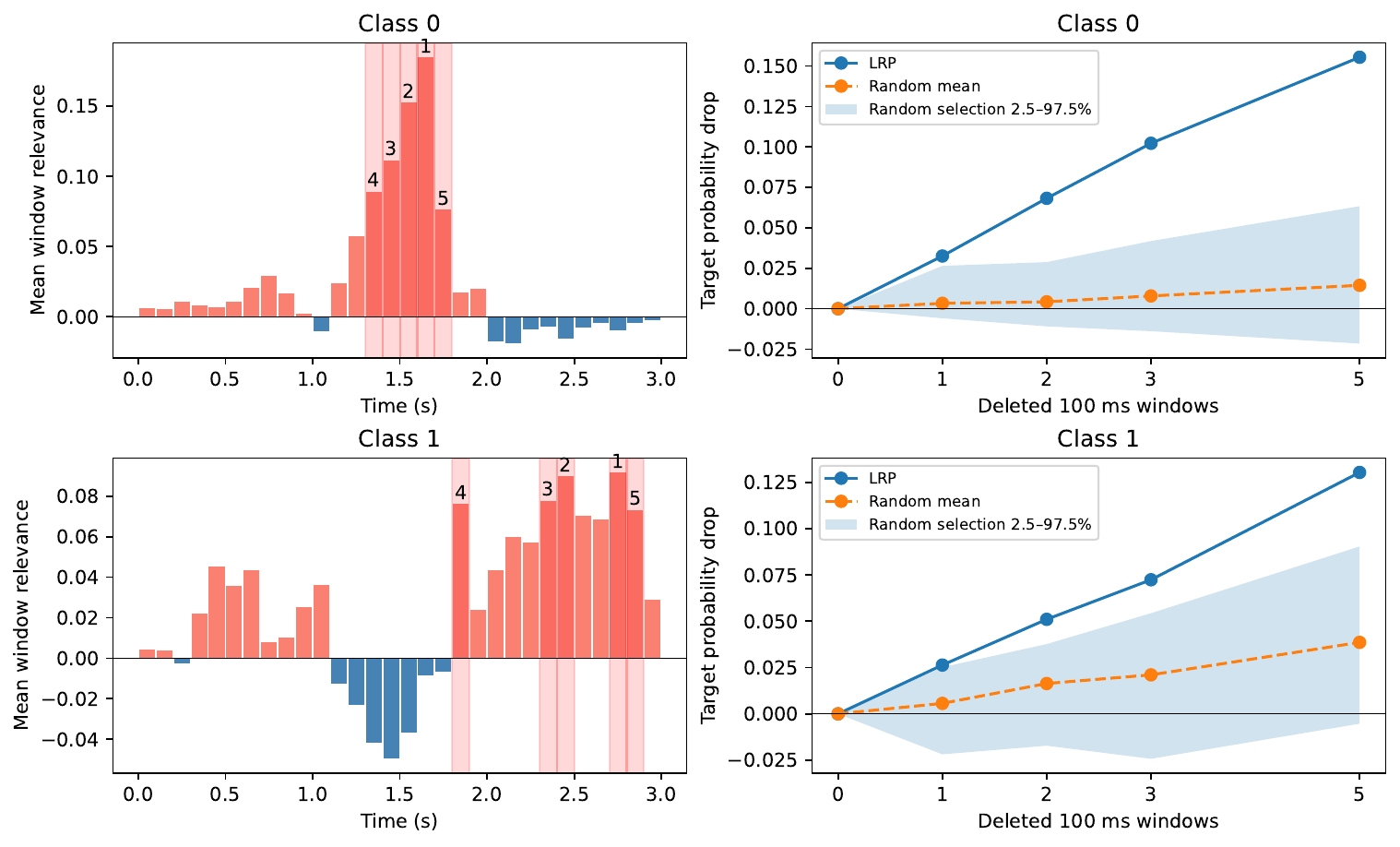}
    \caption{
    Temporal perturbation validation of LRP for low-load
    (top) and high-load (bottom) trials.
    Left: window relevance and the five highest-ranked windows.
    Right: mean target probability drops under
    LRP-guided versus random perturbation.
    Shading indicates the 2.5th-97.5th percentiles across
    20 random selections.
    }
    \label{fig:window_perturbation}
\end{figure}

\section{Model Configuration for CIFAR-10}
\label{app:model_cifar}
\paragraph{Model architecture.}
We use a lightweight Lorentz convolutional network inspired by HyperbolicCV \citep{DBLP:conf/iclr/BdeirSL24}, with explicit origin-centered exponential and logarithmic maps for evaluating relevance propagation through radial transformations. We fix the sectional curvature to $-c$ with $c=1$ and represent each hyperbolic feature by its spatial coordinates $\bm s\in\mathbb R^d$, with the time coordinate implicitly given by $t(\bm s)=\sqrt{c^{-1}+\|\bm s\|_2^2}$. Let $\bm o_L=(c^{-1/2},\bm0)$. For convenience, we denote the spatial components of the origin exponential and logarithmic maps by
\begin{equation}
\begin{aligned}
\exp_{\bm o_L,\mathrm{sp}}^{L,c}(\bm v)
&:=\left[\exp_{\bm o_L}^{L,c}\!\left((0,\bm v)\right)\right]_{\mathrm{sp}}
=\frac{\sinh(\sqrt c\,\|\bm v\|_2)}{\sqrt c\,\|\bm v\|_2}\bm v,\\
\log_{\bm o_L,\mathrm{sp}}^{L,c}(\bm s)
&:=\left[\log_{\bm o_L}^{L,c}\!\left((t(\bm s),\bm s)\right)\right]_{\mathrm{sp}}
=\frac{\operatorname{arsinh}(\sqrt c\,\|\bm s\|_2)}{\sqrt c\,\|\bm s\|_2}\bm s,
\end{aligned}
\end{equation}
where both scalar factors are defined as one at zero. Here $\bm v$ denotes the spatial component of the ambient Lorentz tangent vector $(0,\bm v)$, rather than the aligned Poincar\'e tangent coordinates used in the GRI test. These maps operate independently at each spatial location across the feature channels. The normalized RGB input $\bm x$ is first mapped to spatial Lorentz coordinates as $\bm s^{(0)}=\exp_{\bm o_L,\mathrm{sp}}^{L,c}(0.25\bm x)$. We use the factor of $0.25$ to moderate the initial tangent-space radius and the resulting hyperbolic coordinate magnitudes.

The backbone contains six $3\times3$ convolutional layers with channel widths $(32,32,64,64,128,128)$, strides $(1,1,2,1,2,1)$, and padding of one. For the flattened spatial patch $\bm p_u^{(\ell-1)}$ at location $u$, the $\ell$th convolution computes
\begin{equation}
\tau_u^{(\ell)}=\sqrt{c^{-1}+\|\bm p_u^{(\ell-1)}\|_2^2}, \qquad \bm z_u^{(\ell)}=\bm W_s^{(\ell)}\bm p_u^{(\ell-1)}+\bm w_t^{(\ell)}\tau_u^{(\ell)}+\bm b^{(\ell)},
\end{equation}
where $\bm W_s^{(\ell)}$ and $\bm w_t^{(\ell)}$ are learnable spatial and time-coordinate weights, respectively. Each convolution is followed by a tangent-space activation,
\begin{equation}
\bm s_u^{(\ell)}=\exp_{\bm o_L,\mathrm{sp}}^{L,c}\!\left(\operatorname{ReLU}\!\left(\log_{\bm o_L,\mathrm{sp}}^{L,c}(\bm z_u^{(\ell)})\right)\right).
\end{equation}
The resulting feature resolutions are $32\times32$, $16\times16$, and $8\times8$, with two convolutional layers at each resolution. 
After the final layer, we average the spatial coordinates over the $8\times8$ feature grid, apply the logarithmic map, and obtain the ten class logits through a linear classifier:
\begin{equation}
\bar{\bm s}=\frac{1}{64}\sum_{u=1}^{64}\bm s_u^{(6)}, \qquad f(\bm x)=\bm W_{\mathrm{cls}}\log_{\bm o_L,\mathrm{sp}}^{L,c}(\bar{\bm s})+\bm b_{\mathrm{cls}}.
\end{equation}
Thus, pooling is an arithmetic mean of spatial coordinates rather than a Lorentz centroid, and classification is performed in the tangent space. 
Note that this architecture is a custom lightweight variant rather than the L-ResNet18 in HyperbolicCV.

\paragraph{Training configuration.}
We split the 50000 CIFAR-10 training images into 45000 training and 5000 validation samples, and reserve the official 10,000-image test set for final evaluation. Training augmentation consists of random $32\times32$ crops with four-pixel padding and random horizontal flips. Inputs are normalized using channel-wise means $(0.4914,0.4822,0.4465)$ and standard deviations $(0.2470,0.2435,0.2616)$. We optimize cross-entropy loss for 200 epochs using AdamW with batch size 128, initial learning rate $10^{-3}$, and weight decay $10^{-4}$. The learning rate follows a cosine schedule to zero, and the global gradient norm is clipped to 5. The curvature and input scaling factor remain fixed throughout training. The checkpoint with the highest validation accuracy ($85.80\%$) is selected for test evaluation (test accuracy is $85.70\%$) and attribution experiments.

\paragraph{Relevance propagation through Lorentz convolutions.}
We treat each convolution as an affine mapping of the augmented patch coordinates $(\bm p_u,\tau_u)$ and first redistribute output relevance to the spatial and time branches using the $z^{\mathcal B}$ rule in the first layer and the $\gamma$-rule in subsequent layers. For a $\gamma$-rule layer, suppressing the layer index, the relevance assigned to the patch time coordinate is
\begin{equation}
R_{\tau_u}=\sum_j\frac{\tau_u\,\rho_\gamma(w_{t,j})}{\operatorname{stab}_{\epsilon}\!\left(\sum_i p_{u,i}\rho_\gamma(W_{s,ji})+\tau_u\rho_\gamma(w_{t,j})+b_j\right)}R_{u,j},
\end{equation}
where $\rho_\gamma(w)=w+\gamma\max(w,0)$ and $\operatorname{stab}_{\epsilon}(z)=z+\epsilon\operatorname{sign}_{+}(z)$, with $\operatorname{sign}_{+}(0)=1$. Since $\tau_u$ is determined by the spatial patch, we subsequently redistribute its relevance using squared-coordinate proportions:
\begin{equation}
R_{u,i}^{\mathrm{time}}=\frac{p_{u,i}^{2}}{\sum_m p_{u,m}^{2}}R_{\tau_u}, \qquad R_{u,i}=R_{u,i}^{\mathrm{space}}+R_{u,i}^{\mathrm{time}}.
\end{equation}
For a zero patch, where $\sum_m p_{u,m}^2=0$, we set $R_{u,i}^{\mathrm{time}}=0$ for every coordinate and record $R_{\tau_u}$ as unassigned relevance. The recorded zero-patch contribution, together with residuals from affine biases and numerical stabilization, are summed up as the total relevance conservation residual.

The first-layer $z^{\mathcal B}$ rule instead uses bounded contributions for both spatial and time coordinates, followed by the same squared-proportion redistribution. Contributions from overlapping patches are summed at each input coordinate. The fixed curvature term receives no relevance. 
This time-coordinate rule is shared by LRP-radial-all and LRP-half, which differ only in their treatment of the exponential and logarithmic radial maps.

We use a fixed $\gamma=0.25$ in convolutional layers 2--6 and $\epsilon=10^{-6}$ for denominator stabilization throughout relevance propagation, including the $\epsilon$-rule for classifier and average pooling. The first convolution uses the $z^{\mathcal B}$ rule without $\gamma$ modification, 
for which we derive a fixed bounding box from the valid RGB range $[0,1]$, accounting for normalization, input scaling, and the exponential map. 
Time-coordinate bounds are computed from these spatial bounds using the Lorentz constraint. The same box is used for all images.
These settings are identical for LRP-radial-all and LRP-half.

\paragraph{Baseline configurations.} Integrated Gradients uses a zero baseline in normalized input space, corresponding to the RGB normalization mean, and integration over 128 steps. Attributions are summed across RGB channels to obtain pixel scores.  The random baseline averages three independent pixel permutations per image.

Perturbations replace all three channels of selected pixels with the mean-color baseline. We evaluate sparsity with 5\% increment steps.
Evaluation uses 512 test images sampled uniformly without replacement. We estimate 95\% percentile bootstrap confidence intervals using 10,000 image-level resamples with replacement, sharing resampling indices across methods and metrics.

\section{Additional Figures for Experiments}
\begin{figure}[t]
    \centering
    \includegraphics[width=0.7\linewidth]{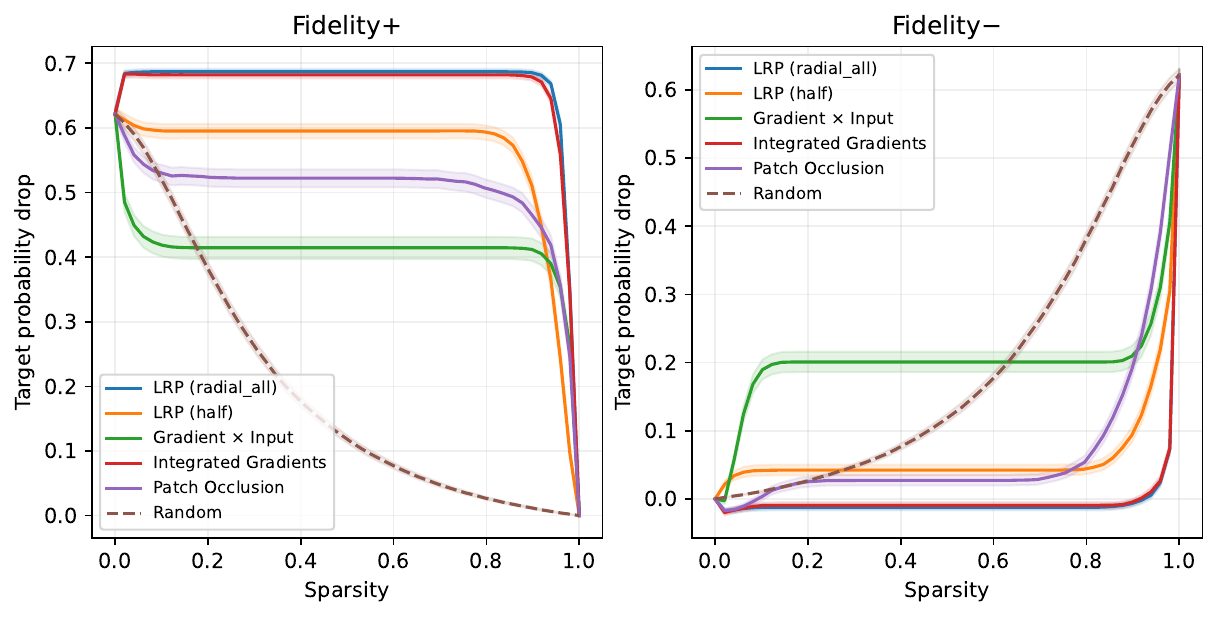}
    \caption{Fidelity-Sparsity curve for MNIST experiment with 95\% confidence interval. For Fidelity+ higher is better and for Fidelity- lower is better. Lower sparsity corresponds to a larger selected pixel set, which is removed for Fidelity+ and retained for Fidelity-.}
    \label{fig:mnist_quantitative}
\end{figure}

\begin{figure}
    \centering
    \includegraphics[width=\linewidth]{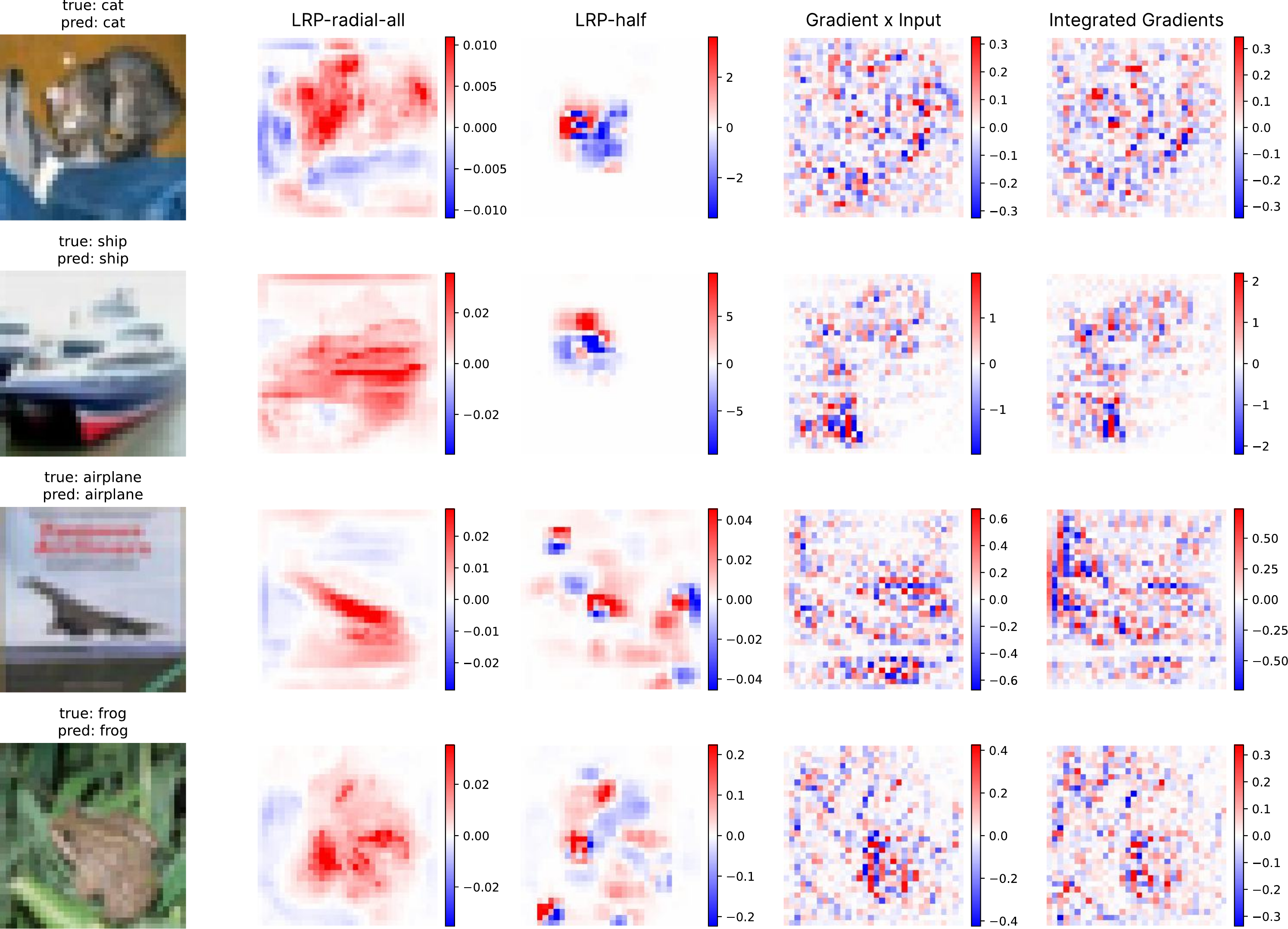}
    \caption{Qualitative attribution comparison on four CIFAR-10 examples. Columns show the input, heatmaps for LRP-radial-all, LRP-half, Gradient$\times$Input, and Integrated Gradients. RGB-channel attributions are summed, with red and blue indicating positive and negative values, respectively.}
    \label{fig:cifar_qualitative_all}
\end{figure}

\begin{figure}
    \centering
    \begin{subfigure}{0.48\textwidth}
        \centering
        \includegraphics[width=\linewidth,trim=0 3cm 0 3cm, clip]{windowed_heatmap_class_0.pdf}
        \caption{Low working-memory load class.}
    \end{subfigure}
    \hfill
    \begin{subfigure}{0.48\textwidth}
        \centering
        \includegraphics[width=\linewidth,trim=0 3cm 0 3cm, clip]{windowed_heatmap_class_1.pdf}
        \caption{High working-memory load class.}
    \end{subfigure}
    \caption{Averaged signals and relevance heatmaps for class 0 (low working-memory load) and class 1 (high working-memory load). Boxes highlight localized relevance, distributed multichannel relevance, stronger late-window relevance, and class-dependent attribution differences. Each time series stands for a sensor. Time range from left to right is 3-second with 3000 points. Red and blue indicate positive and negative relevance for the explained class score, respectively. }
    \label{fig:seeg_visualization_full}
\end{figure}

\end{document}